\def\eqref#1{equation~\ref{#1}}
\def\1{\bm{1}}
\DeclareMathAlphabet{\mathsfit}{\encodingdefault}{\sfdefault}{m}{sl}
\SetMathAlphabet{\mathsfit}{bold}{\encodingdefault}{\sfdefault}{bx}{n}
\definecolor{codegreen}{rgb}{0,0.6,0}
\definecolor{codegray}{rgb}{0.5,0.5,0.5}
\definecolor{codepurple}{rgb}{0.58,0,0.82}
\definecolor{backcolour}{rgb}{0.95,0.95,0.92}
\lstdefinestyle{pythoncustom}{
    commentstyle=\color{codegreen},
    keywordstyle=\color{blue},
    numberstyle=\tiny\color{codegray},
    stringstyle=\color{codepurple},
    basicstyle=\ttfamily\tiny,
    breakatwhitespace=false,
    breaklines=true,
    captionpos=b,
    keepspaces=true,
    numbers=left,
    numbersep=5pt,
    showspaces=false,
    showstringspaces=false,
    showtabs=false,
    tabsize=4
}
\newtheorem{theorem}{Theorem}[section]
\newtheorem{proposition}[theorem]{Proposition}
\theoremstyle{definition}
\newtheorem{definition}[theorem]{Definition}
\theoremstyle{plain}        
\newtheorem{remark}[theorem]{Remark}
\definecolor{cornflowerblue}{rgb}{0.39, 0.58, 0.93}
\DeclareMathOperator{\Sample}{Sample}
\DeclareMathOperator{\InitState}{InitState}
\title{Efficient Parallel Samplers for Recurrent-Depth Models and Their Connection to Diffusion Language Models}
\author{Jonas Geiping 
\\
ELLIS Institute Tübingen \& \\ Max-Planck Institute for Intelligent Systems,\\
Tübingen AI Center\\
\texttt{jonas@tue.ellis.eu} \\
\And
Xinyu Yang \\
Electrical and Computer Engineering \\
Carnegie Mellon University \\
\AND
Guinan Su \\
ELLIS Institute Tübingen \& \\ Max-Planck Institute for Intelligent Systems,\\
Tübingen AI Center\\
}
\begin{document}

\maketitle

\begin{abstract}
\looseness -1 Language models with recurrent depth, also referred to as universal or looped when considering transformers, are defined by the capacity to increase their computation through the repetition of layers. Recent efforts in pretraining have demonstrated that these architectures can scale to modern language modeling tasks while exhibiting advantages in reasoning tasks. 
  In this work, we examine the relationship between recurrent-depth models and diffusion language models. Building on their similarities, we develop a new diffusion forcing sampler for these models to accelerate generation. The sampler advances by decoding new tokens at every forward pass of the model, while the latent states of these tokens can be further refined in parallel through recurrence. Theoretically, generation with our sampler is strictly more expressive than the baseline autoregressive generation using the same time budget on modern hardware. 
  Moreover, this sampler, based on principles from diffusion literature, can be directly applied to existing 3.5B recurrent-depth transformers without any tuning, leading to up to a \textbf{5}x speedup. Consequently, our findings not only provide an efficient mechanism for parallelizing the extra computation in recurrent-depth models at inference, but also suggest that such models can be naturally viewed as strong continuous, though causal, diffusion language models. 
\end{abstract}

\section{Introduction}
\looseness -1 Conventional large language models (LLMs) are constructed as fixed-depth neural networks with a predetermined number of layers (often merely a two-digit count), a property that not only allows these models to be trained efficiently, but in practice appears sufficient for many tasks \citep{radford_language_2019}. However, more challenging tasks in mathematics and programming often require conceptual leaps over multiple steps in a logical chain that are hard for these models to learn robustly. More formally, fixed-depth transformers fall within the complexity class $\mathsf{TC}^{\mathsf{0}}$ \citep{merrill_parallelism_2023}. To resolve this, recent efforts have focused on training models to ``verbalize'' their internal reasoning into chains-of-thought composed of small sub-steps, each of which the model is capable of learning. 

An alternative to fixed-depth are models with \textit{recurrent depth} \citep{dehghani_universal_2019,schwarzschild_can_2021}, which can repeat layers. Consequently, these models are also referred to as \textit{looped} transformers \citep{giannou_looped_2023}, or, as \textit{universal} transformers, \citep{dehghani_universal_2019} when highlighting the motivation for these systems to represent universal Turing machines \citep{graves_neural_2014, graves_adaptive_2017}. \Citet{merrill_little_2025} showcase that, in contrast to fixed-depth models, models with arbitrary recurrence are indeed capable of representing a larger complexity class.

However, generation with autoregressive recurrent-depth models is typically slow, given that every repetition of the model layers must be executed sequentially before the next token can be produced.
\begin{figure}
    \centering
    \vspace{-1em}
    \includegraphics[width=0.75\linewidth]{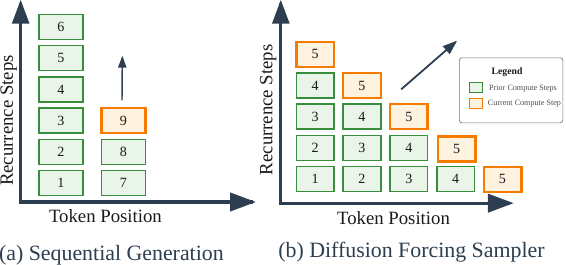}
    \caption{Different generation schemes for autoregressive, recurrent-depth models. \textbf{Left:}  Standard sequential generation, which proceeds one token and step of the recurrence at a time (time steps denoted by integers). \textbf{Right:} A diffusion forcing sampler used for the same model can parallelize generation ``diagonally'', by computing one step of the recurrence per token position, iteratively refining its estimate of the generated sequence. 
    }
    \label{fig:teaser}
    \vspace{-.65cm}
\end{figure}
In this work, we discuss how generation from recurrent-depth models can be efficiently parallelized by connecting this architecture to diffusion model architectures. Both architectures ``recur'' in a related sense, and even though both are trained with different objectives, we show that samplers adapted from diffusion literature, namely, \textit{diffusion forcing} \citep{chen_diffusion_2024-1}, can be directly applied to parallelize the generation of already existing recurrent-depth models from \cite{geiping_scaling_2025}. 

We discuss how to adapt diffusion forcing sampling to recurrent-depth models, identifying the essential architectural components and strategies required to ensure both stability of the iterates and bounded memory usage. As illustrated in \cref{fig:teaser}, rather than waiting for the recurrence at sequence position $n$ to fully converge before generating the next token, our sampler immediately produces token drafts from intermediate iterates. It then advances to position $n+1$, where the subsequent forward pass simultaneously refines the drafts for steps $n$ and $n+1$, while also decoding an initial draft for $n+2$. In this way, the sampler achieves parallelism along the sequence dimension, akin to speculative decoding. Importantly, because the underlying model is trained as a causal language model, information still propagates strictly from left to right, and the output sequence is iteratively refined across recurrences. While this approach does not reduce FLOPs, it effectively exploits modern GPU architectures by unlocking additional opportunities for parallelization. Overall, in this work, we 
\begin{itemize}[itemsep=0.0pt,topsep=0pt,leftmargin=*]
    \item Clarify the connection between recurrent-depth models and diffusion models via diffusion forcing and block or wave-based inference strategies for sequence-based diffusion models.
    \item Describe how to apply principles from diffusion forcing to efficiently parallelize the inference of models with recurrent depth.
    \item Verify that recurrent-depth models equipped with diffusion-forcing samplers achieve the strongest balance between practical efficiency and theoretical expressiveness in both prefilling and decoding. 
    \item \looseness -1 Show that diffusion forcing sampling outperforms even well-tuned speculative decoding baselines for the same model with speed gains that can be smoothly traded off against accuracy.
\end{itemize}
\vspace{-.1cm}
\section{Related Work}

We briefly introduce both recurrent models and diffusion models, focusing on language applications.

\textbf{Recurrent Models.}
Models with recurrent computations have long been central to machine learning \citep{amari_learning_1972,hopfield_neural_1982,braitenberg_vehicles_1986,gers_recurrent_2000,sutskever_recurrent_2008}, not only due to significant inspiration from recurrent firing patterns found in neuroscience \citep{hopfield_neural_1982, lamme_distinct_2000, douglas_neuronal_2004}, and early successes in language modeling centered on recurrent neural networks \citep{mikolov_recurrent_2010,sutskever_generating_2011}. With the advent of transformer models, these architectures were considered less scalable, yet recurrence, now as \textit{recurrence in depth}, was swiftly re-introduced as \textit{universal transformers}, \citet{dehghani_universal_2019}, motivating that these models could be capable of modeling universal Turing machines \citep{graves_neural_2014}. 
Other work showed that recurrent models were capable of learning algorithms \citep{schwarzschild_can_2021,bansal_end--end_2022,bear_rethinking_2024}.
That recurrence was capable of representing universal computation was explicitly constructed for transformer models in \citet{giannou_looped_2023}, and following work on \textit{looped} transformers has shown that these models are capable learners \citep{giannou_looped_2023,gatmiry_can_2024,yang_looped_2024,mcleish_transformers_2024,fan_looped_2025}. These findings have led to a wave of work training larger, general-purpose recurrent-depth models of language \citep{tan_sparse_2023,abnar_adaptivity_2023,mathur_mind_2024,csordas_moeut_2024,geiping_scaling_2025}, as well as work retro-fitting recurrence into trained models \citep{li_deep_2020,bae_relaxed_2024,hay_dynamic_2023,liu_mobilellm_2024}. Several of these works also highlight the possibility of implementing \textit{latent reasoning} via recurrence, that is to complement or replace verbalized chains-of-thought, with recurrence. Examples for this line of thinking are \textit{Coconut} \citep{hao_training_2024}, as well as \citet{liu_deliberation_2024,cheng_compressed_2024}. 

\begin{figure}[t]
    \centering
    \vspace{-.2cm} 
    \includegraphics[width=0.8\linewidth]{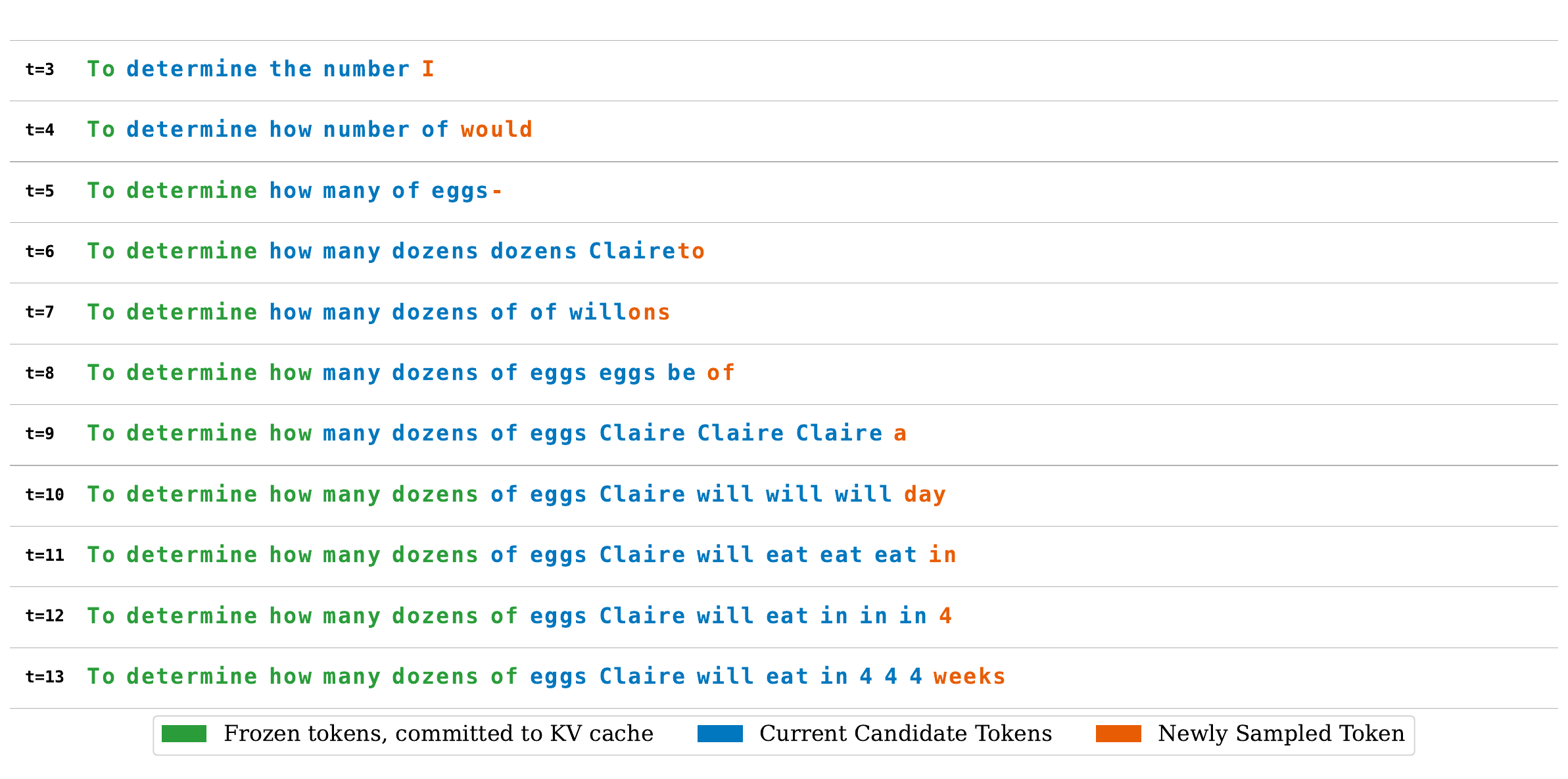}
    \caption{An example of a text sequence being generated with the proposed diffusion forcing sampler from a depth-recurrent model. While the original recurrent-depth model requires 32 recurrence steps to produce a single token (the default for this model), the diffusion sampler has already produced and committed 8 new tokens (green). As described, the sampler advances by at least one token per step of the recurrence.  Decoded candidate tokens are initial spell out incoherent text, but map into the right concepts, and quickly improve with more steps. Note that the ``freeze'' decision is dynamic, based on distance to the previous state in latent space (not pictured). 
    }
    \label{fig:illustration}
    \vspace{-.6cm}
\end{figure}

In this work, we propose a generic sampling algorithm for depth-recurrent models, which we test with the models developed in \citet{geiping_scaling_2025}, which are trained for general language understanding and reasoning on 800B tokens, with 3.5B parameters, and openly accessible.

\looseness -1 \textbf{Diffusion Language Models.}
Diffusion models are general-purpose generative models, with early applications focusing on continuous domains, such as images \citep{song_generative_2019,rombach_high-resolution_2022,peebles_scalable_2023}, which lead to substantial interest in extending diffusion also to discrete domains, such as text \citep{austin_structured_2021,hoogeboom_argmax_2021}. Approaches to language diffusion are split on whether to incorporate diffusion processes on a continuous variable (that is then projected into discrete space) \citep{chen_analog_2022,dieleman_continuous_2022,han_ssd-lm_2023,karimi_mahabadi_tess_2024,jo_continuous_2025,graves_bayesian_2025}, or diffusion processes that directly act on discrete variables.\citep{lou_discrete_2024-1,richemond_categorical_2023}. The latter though, especially using \textit{masking} as the discrete forward diffusion step, is currently the most scalable approach, employed in large-scale efforts to train language diffusion models, competitive with autoregressive models \citep{gong_scaling_2025,gong_diffucoder_2025,google_deepmind_gemini_2025,nie_large_2025,wang_diffusion_2025,xie_dream-coder_2025,ye_dream_2025}. 


\textbf{Inference Strategies for Diffusion Language Models.}
To make diffusion tractable for arbitrarily long sequences requires techniques such as block diffusion \citep{arriola_block_2025}, where chunks of text are being modified by the diffusion model, and then frozen and their KV entries cached, with the sampler moving to the next chunk. A more free-form approach to handle sequence-based diffusion is to use \textit{diffusion forcing} \citep{chen_diffusion_2024-1}, a hybrid model, where noise is added to future tokens in a sequence relative to the position of the current token, allowing the sampler to move both on both the sequence dimension and the diffusion time dimension.

\textbf{Inference Acceleration for Fixed-Depth Transformers.}
Inference in transformers, in particular in small-batch settings is memory-bound, meaning that the transfer of data (or, in the default case, model parameters) to and from the L1 cache of the accelerator, is the dominating cost during inference, allowing algorithms such as speculative decoding \citep{leviathan_fast_2023} and follow-ups \citep{cai_medusa_2024,miao2024specinfer,chen2024sequoia} to improve inference speed through speculative parallelization. Using smaller draft models, these algorithms draft text several tokens in the future, which can then be verified using the original model, as verification of the entire text sequence is compute-bound and hence, fast.



\section{Applying Diffusion Forcing to Recurrent-Depth Models}
In this section, we present our diffusion forcing sampler for recurrent-depth models, which accelerates text generation by advancing at least one token in each recurrence step, as illustrated in \cref{fig:illustration}.

\subsection{Background on Recurrent-Depth Models}
\looseness -1 Before detailing the diffusion forcing sampler, we briefly describe the particular recurrent-depth architecture proposed by \citet{geiping_scaling_2025}, emphasizing features of the model that are pertinent to the sampler's functionality. We will use the checkpoint name \textit{Huginn-0125} when referring to the trained model. The architecture of this model contains three main blocks, each composed of multiple transformer layers: (i) a prelude block $P$, projecting the embedded input tokens into a latent space; (ii) a recurrent block $R$, iterating $r$ times in this latent space by refining a state vector $\textbf{s}$, and (iii) a coda block $C$ that processes the latent state and produces the model's probabilities for the next token, formally 
\begin{align*}
    \mathbf{e} &= P(\mathbf{x}) \\
    \mathbf{s}_0 &\sim \mathcal{N}(\mathbf{0}, \sigma^2 I) \\
    \mathbf{s}_i &= R(\mathbf{e}, \mathbf{s}_{i-1}) \qquad \textnormal{for} \quad i \in \lbrace 1, \dots, r \rbrace \\
    \mathbf{p} &= C(\mathbf{s}_r).
\end{align*}
Notably, while this architecture is derived from looping the middle layers of fixed-depth transformer models \citep{skean_does_2024,sun_transformer_2024,kaplan_tokens_2024}, with features such as input injection and random state initialization from the literature of recurrent-depth models \citep{bansal_end--end_2022,anil_path_2022}, it can also be interpreted as a \textit{latent-space diffusion model} following the formulation of \citet{rombach_high-resolution_2022}: Starting from an initial random state $s_0$, the model iteratively refines this state conditioned on the embedded input sequence $e$, until we assume the state to be completely denoised at the end of the process, at which point it will be decoded into the next token using $C$.

In \citet{geiping_scaling_2025}, this model is trained using randomized unrolling with truncated backpropagation, i.e. a random number of iterates $r$ is sampled (from a Poisson-lognormal distribution), and then the entire current batch of training sequences is iterated up to $r$, which is not directly related to diffusion language modeling, which most effectively trains by randomized masking and adaptation from autoregressive models \citep{nie_large_2025,xie_dream-coder_2025,ye_dream_2025,gong_scaling_2025}.

\subsection{The Ingredients for Diffusion Forcing Sampling} 
While we will describe experiments using this particular recurrent-depth model, the sampler can be applied to all recurrent-depth models that fulfill the following requirements.

\textbf{Input Injection.}
The first necessary component, aside from the recurrence over layers itself, is the input injection, i.e., the conditioning of the recurrence on $e$. This will allow the sampler to ``course-correct'' if conditioning changes without having to jettison a partially computed state $s$. The other component that may improve the connection to diffusion modeling is the initialization of random states, but while we speculate that this is beneficial, it is not architecturally necessary. As such, recurrent-depth models trained in \citet{csordas_moeut_2024,schone_implicit_2025,mohtashami_cotformer_2024} or \citet{wang_hierarchical_2025} could also benefit from this sampler. However, looped architectures such as \textit{Coconut} \citep{hao_training_2024}, which train to feed the outputs of a transformer back in as inputs, are not immediately supported and require retraining to incorporate input injection, separating their recurrent state from their input data.

\textbf{Robust Recurrence.} The second necessary property is that the intermediate state at every step of the recurrence must be decodable to approximately correct solutions. While this property is generally satisfied, it may fail in models trained exclusively with a fixed number of recurrences $r$, where decoding from earlier steps can yield nonsensical outputs rather than approximate versions of the intended result.

\begin{wrapfigure}[18]{r}{0.5\textwidth}     
    \centering
    \vspace{-.45cm}
    \includegraphics[width=0.48\textwidth]{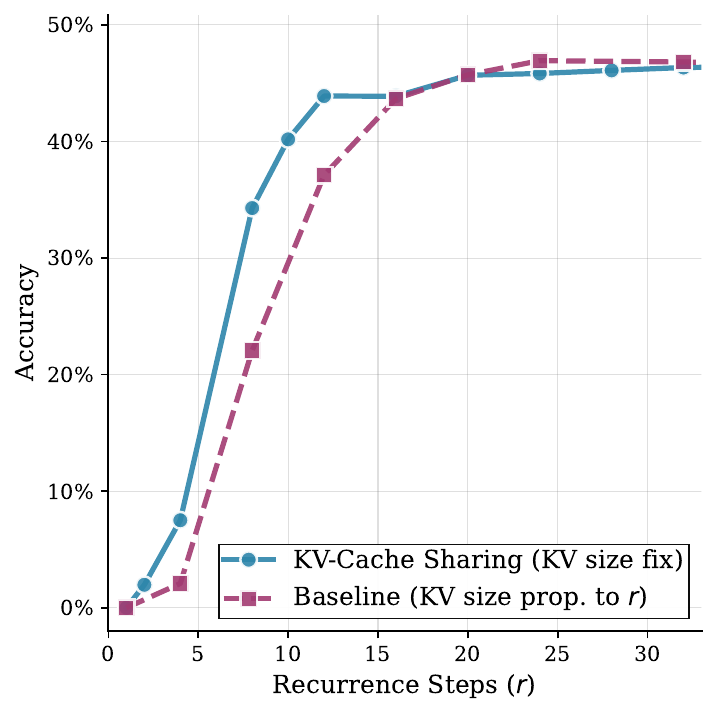}
    \vspace{-.3cm}
    \caption{The \textit{Huginn-0125} recurrent-depth model can match the baseline performance on the GSM8k dataset when enabling KV cache sharing (with a minimal cache size of 1), using $r$-times less memory for KV states.}
    \label{fig:cache_sharing}
\end{wrapfigure}
\looseness -1 \textbf{KV Cache Sharing.} The third property, while not strictly required but highly beneficial for diffusion forcing samplers, is the ability of different recurrent depths to share their KV cache across iterations during generation. Without fungible KV states, all KV states from previous recurrences and tokens must be retained in memory, causing the cache to grow with both sequence length and recurrence depth. As shown in \cref{fig:cache_sharing}, the trained \textit{Huginn-0125} model inherently supports KV cache sharing, allowing us to store only the KV state of the most recent recurrence for each token position\footnote{With this form of KV sharing, the cache requires no more memory than that of a parameter-matched fixed-depth transformer.}.

\subsection{A Simplified Version of the Sampling Algorithm}

Next, we present the algorithm for our sampler. 
Given a prompt $x$, \cref{alg:simple_diffusion} describes a simplified version that directly adapts diffusion forcing principles to parallelize generation across the sequence dimension. This approach yields improvements in tokens/second while maintaining equivalent total FLOP requirements. An example of the sampler's behavior is illustrated in Figure~\ref{fig:illustration}.

We emphasize several important aspects. First, the number of inner recurrences $r'$ may be chosen to exceed one. These additional iterations are relatively inexpensive, since the broader logic of the sampler is not yet invoked. More importantly, they serve to stabilize the recurrence. Because the conditioning on the input embedding $\mathbf{e}$ may vary across successive steps of the sampler, the model risks becoming trapped in oscillatory behavior unless sufficient steps are allowed to adapt the current state to the evolving conditioning. This mechanism closely parallels practices in the diffusion literature, such as the use of supplementary diffusion steps in \citet{bansal_universal_2023-1} to incorporate complex guidance signals into image diffusion models. 

Second, we naturally employ this sampler only during the generation phase, as the prefill phase is already parallelizable in the sequence dimension, as the recurrence can be computed on all token positions of the prompt simultaneously.

Further, in terms of efficiency, we note that we do not actually want to keep the state for all tokens changing indefinitely, as doing so would slow down generation again, as well as increase memory usage dramatically. As such, similar to block-diffusion samplers \citep{arriola_block_2025}, we look for rules that decide when each position is ``finished''. In the simplified version of the sampler, we freeze the last token once we reach a predetermined number of recurrence steps at this position -- which naturally happens $r$ positions behind the current maximal extent of the sequence. Frozen tokens are removed from the state vector and their KV states are added to the cache, so that, as in block diffusion models \citep{arriola_block_2025}, at each point in time, only a small subset of tokens in being modified and the full generation runs like a wave over the generating sequence. Finally, note that with this simplified exit rule, $r'=r$ exactly recovers the original autoregressive sampler.

\subsection{Stabilizing components based on Diffusion Principles}
Further, we also experiment with adding momentum to the input conditioning $\mathbf{e}$, setting
\begin{equation}
    \mathbf{e} = \eta~\mathbf{e}_{\text{prev}} + (1-\eta) \mathcal{P}(y_\text{current}),
\end{equation}
which we find can stabilize the recurrence in challenging sequences, providing a small, but robust gain on average. 

Secondly, surprisingly, we find that even though these models are never trained with noise injected into intermediate states, that artificially adding noise to the state in each step of the sampler, in analogy to sampling from continuous diffusion models, i.e.
\begin{equation}
    \mathbf{z'} =  (1-\beta_t)\mathbf{z} + \beta_t\,\mathbf{z}_{\text{noise}} \qquad \qquad \textnormal{where}  \quad \mathbf{z}_{\text{noise}}=\InitState(1,\alpha),
\end{equation}
can stabilize the iterative process, leading to gains in both accuracy and throughput if $r'$ is small. In practice, we schedule $\beta_t$ linearly as a function of steps $t$ at each position, so that the latter steps are naturally less noisy \citep{chen_diffusion_2024}, which we find to outperform either scheduling $\beta_t$ scaled by the square root of the number of recurrences at each position or keeping it constant. However, the optimal value of $\beta_t$ depends on $r'$.

\begin{algorithm}[t]
\caption{Diffusion-forcing-style generation, simplified version (Full Version in \cref{alg:latent_diff_freeze})}
\label{alg:simple_diffusion}
\begin{algorithmic}[1]
\small
\Require current text context $\mathbf{x}$, max new tokens $N$, inner recurrence $r'$, total recurrences per token $r$, diffusion steps $T$, init scale $\alpha$
\State \(\mathbf{y}_{\mathrm{frozen}} \leftarrow \mathbf{x}\)
\State \(\mathbf{y}_{\mathrm{current}} \leftarrow \mathbf{x}\)
\State \(\mathbf{z} \leftarrow \InitState(1,\alpha)\)
\For{step $t=1,\dots,T$}
    \State \(\mathbf{e}\leftarrow \mathcal{P}(\mathbf{y}_{\mathrm{current}})\) 
    \State \(\mathbf{z}_{\text{noise}} \leftarrow \InitState(1,\alpha)\)
    \State \(\mathbf{z}\leftarrow (1-\beta_t)\mathbf{z} + \beta_t\,\mathbf{z}_{\text{noise}}\)
    \For{$j=1,\dots,r'$} 
        \State \(\mathbf{z}\leftarrow \mathcal{R}(\mathbf{z},\mathbf{e})\) \Comment{Inner recurrence}
    \EndFor
    \State \(\mathbf{p}\leftarrow \mathcal{C}(\mathbf{z})\) \Comment{project latent states to logits}
    \State \(\hat{\mathbf{y}}\leftarrow \Sample(\mathbf{p})\) 
    \State $\mathbf{y}_{\mathrm{current}} \leftarrow [\mathbf{y}_{\mathrm{frozen}}, \hat{\mathbf{y}}]$
    \State \(\mathbf{y}_{\mathrm{frozen}} \leftarrow \) Assign $\mathbf{y}_{\mathrm{current}}$ up to the last $\lceil{\frac{r}{r'}}\rceil$ entries \Comment{Freeze completed tokens}
    \If{$|\mathbf{y}_{\mathrm{frozen}}|-|\mathbf{x}|\ge N$} \textbf{break} \EndIf
    \State \(\mathbf{z}\leftarrow [\mathbf{z},\,\InitState(1,\alpha)]\) \Comment{Append a new latent state for the next position}
\EndFor
\State \Return \(\mathbf{y}_{\mathrm{frozen}}\) 
\end{algorithmic}
\end{algorithm}

\subsection{Adaptive Exits}

\looseness -1 However, the fixed exit scheme of the simplified sampler can run into issues. The recurrent-depth model is causal and how quickly states converge depends on the complexity of the query. This can lead to situations where either, compute is wasted because the states at certain positions have already converged quicker than $r$, or, more problematically, states where, due to a late change in the conditioning of prior tokens, the states have not converged in time. Freezing these unfinished states would worsen generation, in the worst case leading to a spiral where each token that is frozen incorrectly slows down convergence further, leading to a response that becomes more incorrect with each token.

However, we can remedy both cases through adaptive compute. We pick the simplest adaptive exit criterion, the normalized distance in latent space, and compute this quantity for each position and freeze up to all positions where this distance $\delta_i$ is smaller than a threshold $\varepsilon$. 
\begin{align}
\delta_i = \frac{\Vert\mathbf{z}_i - \mathbf{z}_{\text{prev},i}\Vert_2}{\Vert\mathbf{z}_i\Vert_2}, \label{eq:conv} \qquad k^* = \max\{k : \delta_j < \varepsilon \text{ for all } j \leq k\}
\end{align}
We combine this with a limiter on the maximum length of the wavefront of the algorithm to guarantee that both 1) the number of states currently being modified, so the maximum memory footprint, is bounded and 2) only positions with converged states are frozen. The full algorithm is described in Appendix \cref{alg:latent_diff_freeze}. With these rules in place, we note that setting the wavefront to 1 token, we exactly recover the token-per-token adaptive compute sampler from \citep{geiping_scaling_2025}.

We show the practical outcome of this sampler for a challenging input sequence from GSM8k in a series of heatmaps in the appendix, see \cref{fig:token_stability}. The heatmap shows the development of the sequence as a function of generation steps and tokens. We see that the wave first advances quickly, but then halts for a short amount of steps, before resuming the advance. 

\begin{figure}
    \centering
    \vspace{-.3cm} 
    \includegraphics[width=0.32\linewidth]{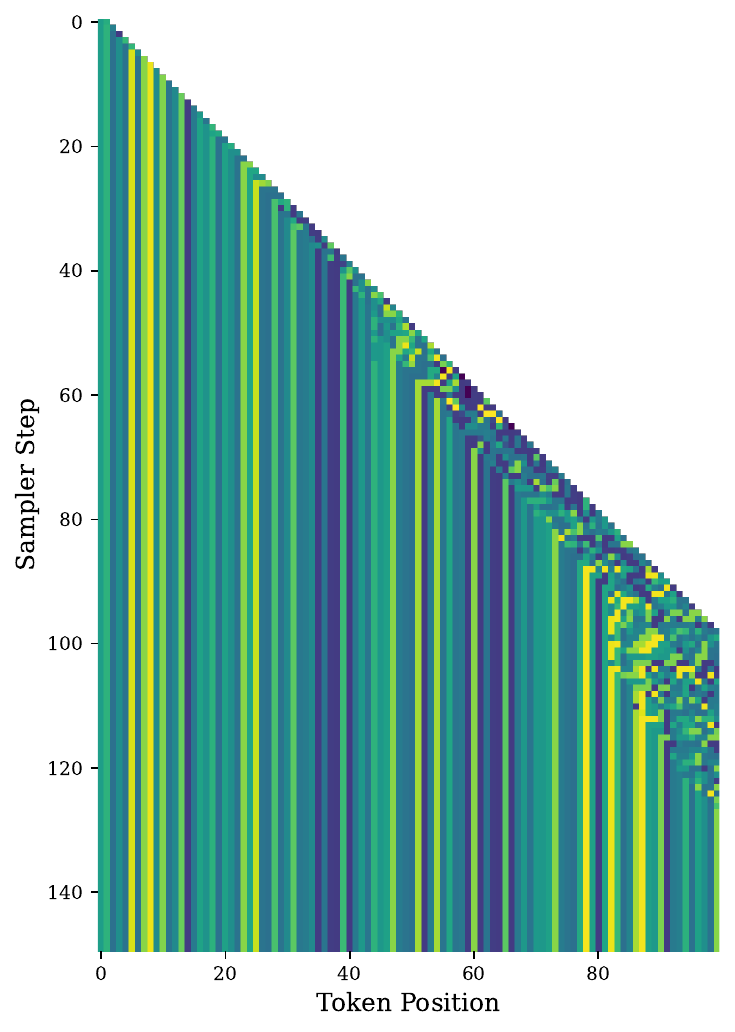}
    \includegraphics[width=0.32\linewidth]{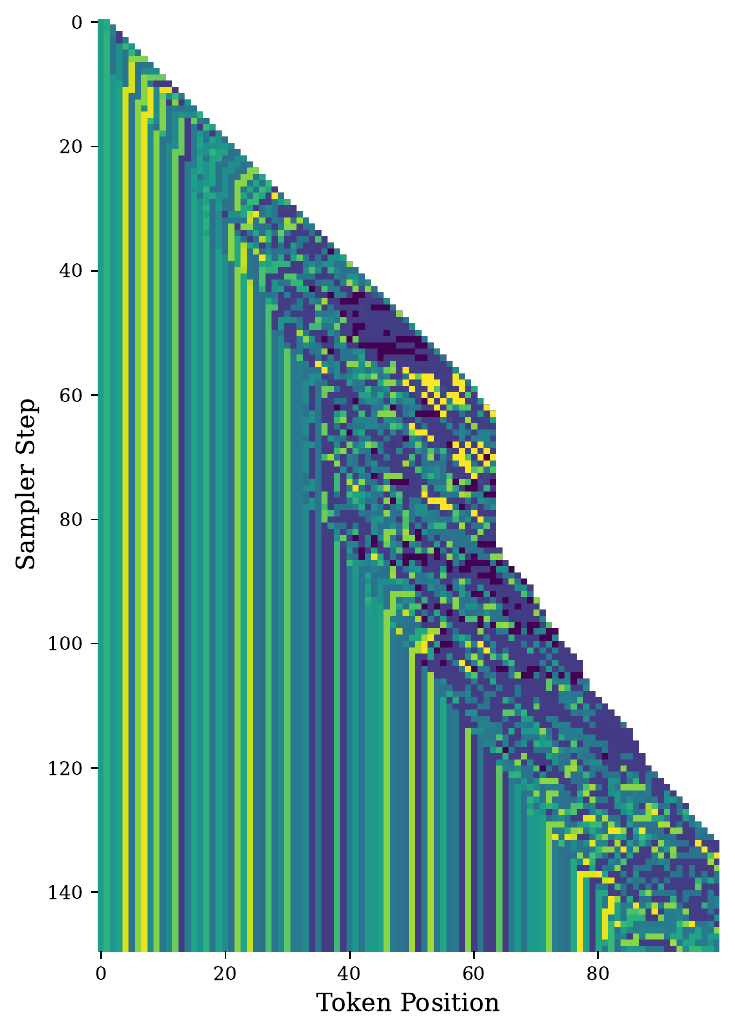}
    \includegraphics[width=0.32\linewidth]{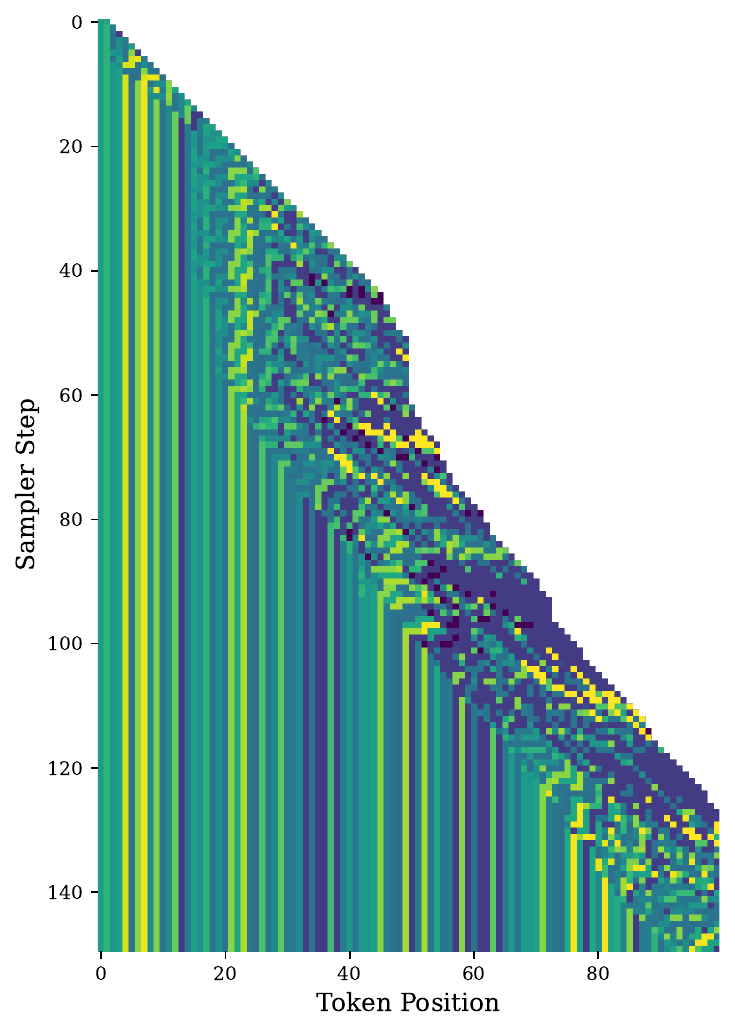}
    \caption{\textbf{Examples of adaptive sampler behavior}. Each color represents a token id in the vocabulary of the model, showing the development of the generated sequence (running left to right) as a function of sampler steps (running top to bottom) for \textit{different hyperparameter choices}. The leftmost example is $r'=4$, and tokens are frozen quickly, whereas middle and right show sequences with $r<4$ require more adaptive computation, and in both cases the sampler stalls after hitting the maximal length of the wavefront (here 32 to visualize), before resolving the sequence and advancing again.}
    \vspace{-.5cm}
    \label{fig:qualitative_examples}
\end{figure}

\begin{remark}[Convergence of the Adaptive Diffusion Sampler]
    With this algorithm, we can, in principle guarantee convergence to the same solution as when sampling autoregressively, if we assume that the recurrent block $R$ is a contraction. Then,  convergence of iterates, i.e. \cref{eq:conv}, implies convergence to the fixed point of the operator. Second, because the model is causal, convergence of the first token position does not depend others and will converge at some step $t$. At this step, the conditioning of the subsequent token is frozen, so it will also converge, proving convergence of the full sequence to the autoregressive solution by induction. However, in practice, large-scale recurrent-depth models are not easily proven to be contractive, even if models are approximately path-independent \citep{anil_path_2022}.
\end{remark}

Finally, we remark on practical back-of-the-envelope estimates of runtime cost. 
\begin{remark}[Computational Cost]
In comparison to the baseline autoregressive sampling algorithm where the recurrence is computed one token at a time, there are two additional sources of computational cost, the cost to encode and decode latent states using $\mathcal{P}$ and $\mathcal{C}$, and the potential cost incurred if convergence is slower than in baseline due to cascading effects of tokens changing late, as seen in \cref{fig:qualitative_examples} if the adaptive version is used. The first cost depends on the size of the recurrent block $\mathcal{R}$, relative to prelude and coda. For the model we study in this work this is  disadvantageous as the FLOP costs for prelude and coda equal one pass through the recurrent block. We define the FLOP costs of one pass through $\mathcal{R}$ as $f$, ignoring attention, so that the FLOP costs of one iteration of the sampler is roughly $(r'+1)f$. Then, the total FLOP costs of running the baseline algorithm for $w$ tokens are  $(r+1)f w$, compared to  $(r+\frac{r}{r'})f w$ for the non-adaptive diffusion sampler. However, as we will see, this FLOP inefficiency is counteracted in practice by the  parallelization gains obtained from the sampler.
\end{remark}


\vspace{-1em}
\section{Theoretical Analysis}

This section develops a theoretical framework to justify the optimality of our design in balancing efficiency and expressiveness with two research questions (RQs): \textbf{(i)} Why should models prioritize recurrence, i.e. \textit{depth scaling}, during prefilling? and \textbf{(ii)} Why should models prioritize parallelizing decoding from a larger wavefront of tokens using the sampler described in the previous section, i.e. \textit{width scaling} during decoding?

\vspace{-1em}
\subsection{Problem Formulation}

Before answering these RQs, we formalize the notions of depth and width within our framework, which limits our analysis to Transformer-based autoregressive LLMs. In particular, we focus exclusively on the comparison between depth and width, without considering length (i.e., CoT) scaling.

\begin{definition}[Depth and Width in Recurrent-Depth Models, informal]
For recurrent-depth models, we define \emph{depth} $d_t$ and \emph{width} $w_t$ at each time step $t \in \mathbb{N}$, with initial conditions $d_0 = 0$ and $w_0 = L_0$ (where $L_0$ denotes the input sequence length). The corresponding update rules are given as follows:
\begin{enumerate}[itemsep=1pt,topsep=0pt,leftmargin=*]
\item \textbf{Depth Update:} At each step $t$, $d_{t+1} = d_t + 1$ with $d_0 = 0$, therefore $d_t = t$ for all $t \in \mathbb{N}$.
\item \textbf{Width Update:} At each step $t$, width changes only through token exits and token entries:
\[
\delta^{(t)} =
\begin{cases}
-1, & \text{if a hidden state decodes from the model (exit event)},\\
+1, & \text{if a latest token encodes into the model (entry event)}.
\end{cases}
\]
\end{enumerate}
\end{definition}

\subsection{LLMs should prioritize depth scaling during prefilling.}

To establish this, we first define a width scaling architecture without increasing model parameters following \cite{wu2025efficient}.  Concretely, we repeat each token along the sequence dimension. Note that during prefilling, increasing the number of such repeated tokens is equivalent to width scaling under our definition, since this expands the input sequence length. Here, we introduce two variants:
\begin{itemize}
[itemsep=0.0pt,topsep=0pt,leftmargin=*]
\item Width Scaling without KV Sharing (Width-NoShare): For the $j$-th copy of token $i$, attention is allowed to all copies of tokens $0,\dots,i-1$, as well as the first $j-1$ copies of token $i$.
\item Width Scaling with KV Sharing (Width-KVShare): For the $j$-th copy of token $i$, attention is limited to (i) the last copy of tokens $0,\dots,i-1$, and (ii) the first $j-1$ copies of token $i$.
\end{itemize}

Based on the above definition, we state the importance of depth scaling during prefilling stage.

\begin{theorem}[Depth vs. Width Scaling in Prefilling, informal]
Given the width-scaling architecture above and our recurrent-depth model with the same scaling factor $s$. Then the following hold:
\begin{enumerate}[itemsep=2pt,topsep=0pt,leftmargin=*]
\item \textbf{Expressiveness.} Under equal scaling factors,  depth scaling is more expressive than width scaling.
\item \textbf{Complexity.} For asymptotic prefill cost (including both attention and linear layers), we have
\[
E_{\textnormal{Depth}} \;\leq\; E_{\textnormal{Width\!-\!KVShare}} \;<\; E_{\textnormal{Width\!-\!NoShare}}.
\]
\item \textbf{Parallelism.} There exists a threshold $L_\star$ such that for $L<L_\star$, width scaling provides $s^{2}$ times the parallelism of depth scaling, while for $L\geq L_\star$ both saturate with similar parallelism. 
\end{enumerate}
\end{theorem}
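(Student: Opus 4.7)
The plan is to treat the three claims (expressiveness, complexity, parallelism) separately, since each depends on a different aspect of the architecture, and to fix a FLOP model up front that isolates the recurrent block $\mathcal{R}$ so that the comparison is apples-to-apples. I would let $L$ denote the prefill length, $s$ the common scaling factor, $c_{\mathrm{lin}}$ the per-token linear cost of one pass through $\mathcal{R}$, and $c_{\mathrm{attn}}$ the per-query-key attention cost. The two width variants produce an effective sequence of length $sL$, while depth scaling keeps length $L$ and iterates $\mathcal{R}$ a total of $s$ times. With this bookkeeping the proof becomes a direct counting exercise plus a simulation argument for expressiveness and a bottleneck argument for parallelism.

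For expressiveness, I would argue by mutual simulation. First, show that Width-KVShare with factor $s$ is a strict special case of depth scaling with factor $s$: the $j$-th copy of token $i$ in Width-KVShare only reads the last copy of earlier tokens and its own earlier copies, which is exactly the information flow that a depth-recurrent block enjoys after $j$ recurrent steps conditioned on the embeddings of tokens $0,\dots,i$. One can therefore choose parameters of $\mathcal{R}$ that realize the width model's per-copy update on the recurrent state. For strictness, I would invoke the $\mathsf{TC}^{0}$ separation already cited in the introduction: with fixed-depth attention and only $s$ extra tokens per position, the composed circuit still lies in $\mathsf{TC}^{0}$ up to polylog overhead, whereas $s$ genuine recurrences of $\mathcal{R}$ can implement strictly larger classes as in \citet{merrill_little_2025}. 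Picking a single concrete function outside the width-induced class (for instance, an iterated modular composition requiring $\Omega(s)$ sequential nonlinear updates) then suffices.

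For complexity, I would simply add up the per-block costs. Depth scaling pays $s\cdot L\cdot c_{\mathrm{lin}} + s\cdot L^{2}\cdot c_{\mathrm{attn}}$. Width-NoShare pays $sL\cdot c_{\mathrm{lin}} + (sL)^{2}\cdot c_{\mathrm{attn}}$, which is strictly larger in the attention term by a factor $s$. Width-KVShare, using the stated attention mask where the $j$-th copy of token $i$ attends only to the $s$ last copies of the $i-1$ earlier tokens and the $j-1$ earlier copies of itself, pays
\begin{equation}
sL\cdot c_{\mathrm{lin}} + c_{\mathrm{attn}}\sum_{i=1}^{L}\sum_{j=1}^{s}\bigl(i + j\bigr) = sL\cdot c_{\mathrm{lin}} + c_{\mathrm{attn}}\cdot O\!\bigl(sL^{2} + s^{2}L\bigr).
\end{equation}
Hence $E_{\mathrm{Depth}} \le E_{\mathrm{Width\text{-}KVShare}}$ with the gap exactly the intra-token attention $O(s^{2}L)$, and $E_{\mathrm{Width\text{-}KVShare}} < E_{\mathrm{Width\text{-}NoShare}}$ because the latter additionally multiplies the inter-token term by $s$. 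The chain of inequalities then drops out.

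For parallelism, I would model the accelerator as having a fixed maximum number $M$ of simultaneous fused multiply-adds per cycle, and define the parallelism of a scheme as the size of its largest data-parallel kernel. Depth scaling exposes an $L\times L$ attention kernel and an $L\times c_{\mathrm{lin}}$ matmul at each of the $s$ serial recurrences; width scaling instead exposes a single $(sL)\times(sL)$ attention kernel and $(sL)\times c_{\mathrm{lin}}$ matmul. The attention ratio is therefore $s^{2}$, matching the claim, until either kernel exceeds $M$. Setting $L_{\star}$ to be the $L$ at which $L^{2}\ge M$ (so depth already saturates the device) gives the stated threshold: for $L<L_{\star}$ width wins by the full $s^{2}$ in the attention tile, while for $L\ge L_{\star}$ both schemes are compute-bound at rate $M$ and the ratio collapses to a hardware-dependent constant. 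The main obstacle, and the place where I would be most careful, is the expressiveness separation: the informal claim ``strictly more expressive'' needs a witness function, and one must rule out the possibility that Width-KVShare already captures the same recurrent computation by letting the extra copies act as scratch space. Pinning down a concrete separating task and matching the $s$-step circuit depth of $\mathcal{R}$ against the effective depth of the width model is where the subtlety lies; the complexity and parallelism parts are routine counting once the cost model is fixed.
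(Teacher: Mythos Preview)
The paper does not actually supply a proof of this theorem. In the appendix it restates the depth/width definitions, formally defines the two width-scaling variants, and proves only a preliminary proposition (that \textsf{Width-NoShare} and \textsf{Width-KVShare} are indeed width-scaling architectures with factor $s$, leaving depth unchanged and multiplying width by $s$). The three substantive claims---expressiveness, complexity, parallelism---are left at the informal level, so there is no paper proof to compare against; your proposal is in effect filling in content the paper omits.

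On its own merits your plan is largely sound for the complexity and parallelism items: the FLOP bookkeeping you set up yields exactly the chain $E_{\mathrm{Depth}}\le E_{\mathrm{Width\text{-}KVShare}}<E_{\mathrm{Width\text{-}NoShare}}$, and the accelerator-saturation argument produces the right qualitative picture (note your verbal description of the KVShare mask says ``the $s$ last copies'' while the definition and your own formula use the single last copy---keep the formula). The expressiveness part is where you should be most careful. Your simulation of \textsf{Width-KVShare} inside depth recurrence asserts that copy $j$ of token $i$ ``is exactly the information flow'' of recurrence step $j$; but in \textsf{Width-KVShare} copy $j$ attends to the \emph{final} copy (copy $s$) of every earlier token, whereas in synchronous depth recurrence at step $j$ token $i$ sees the step-$j$ state of earlier tokens, not their step-$s$ state. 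The two information flows do not match for $j<s$, so the one-line simulation you propose does not go through as stated and you would need either a different embedding or an inductive argument over positions. Separately, your $\mathsf{TC}^0$ separation only bites when $s$ is allowed to grow with the input; for a fixed $s$ both architectures are constant-depth, so make explicit that the strictness claim is asymptotic in $s$. Finally, you only handle the inclusion for \textsf{Width-KVShare}; since \textsf{Width-NoShare} has a strictly larger attention pattern, you should also argue it is dominated by depth, or else restrict the expressiveness claim to the KVShare variant.
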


\begin{remark}
Let $L$ be a random variable for prompt length with distribution $\mathcal{D}$. Then the probability that depth scaling is more efficient than width scaling equals $\Pr_{L \sim \mathcal{D}}[L \geq L_\star]$. Since $L_\star$ on modern GPUs typically lies between a few hundred and a few thousand tokens while empirical input length distributions place substantial mass above this range, the probability is indeed close to $1$ in practice.
\end{remark}

\subsection{LLMs should prioritize width scaling during decoding.}

Next, we prove that recurrent-depth models should use diffusion forcing samplers during decoding.

\begin{theorem}[Depth vs. Width Scaling in Decoding, informal]
For recurrent-depth models with $r > 1$ inner recurrences, if diffusion forcing sampling and KV-cache sharing are employed with wavefront size $W \leq L_\star$, then diffusion forcing decoding achieves equal depth and strictly greater width compared to standard autoregressive decoding under the same runtime constraints. Mathematically, this relationship can be expressed as: $$d_{\text{DF}}(T) = d_{\text{AR}}(T) \quad \text{and} \quad w_{\text{DF}}(T) > w_{\text{AR}}(T),$$ where $T$ is the runtime budget, and $\text{DF}$ and $\text{AR}$ denote diffusion forcing and autoregressive decoding.
\end{theorem}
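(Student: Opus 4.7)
The plan is to align both samplers onto a shared discrete timeline whose ticks correspond to individual forward passes through the recurrent block $\mathcal{R}$. Under the hypothesis $W \leq L_\star$ together with KV-cache sharing, the parallelism clause of the preceding prefill theorem implies that one forward pass over a $W$-token wavefront costs essentially the same wall-clock time as a forward pass over a single position, so within runtime budget $T$ both samplers execute the same number $N(T)$ of forward passes. The depth-equality claim is then immediate: by the update rule $d_{t+1} = d_t + 1$, we obtain $d_{\text{DF}}(T) = d_{\text{AR}}(T) = N(T)$.

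For the strict width inequality, I would bookkeep entry and exit events under each algorithm. In autoregressive decoding, both an entry and an exit occur only once per completed token, that is, once every $r$ forward passes, and the active wavefront is of size one, so $w_{\text{AR}}(T)$ stays pinned near its initial value $w_0$. By contrast, Algorithm~\ref{alg:simple_diffusion} appends a fresh latent state after every outer iteration (the final line of the for-loop), while the freezing rule only commits a token once its trailing recurrence count reaches $\lceil r/r' \rceil$. Choosing $r' < r$, which is the only regime in which diffusion forcing offers a speedup and is implicit in the motivating assumption $r > 1$, makes the entry rate strictly dominate the exit rate until the wavefront saturates at $W$. Hence, after a short ramp-up phase, $w_{\text{DF}}(T) = w_0 + W - 1 > w_{\text{AR}}(T)$, giving the strict inequality at all subsequent times.

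The main obstacle I expect is the runtime-alignment step. The prefill theorem's threshold $L_\star$ was derived for a fresh forward pass on an isolated input, whereas here it must be invoked mid-decoding against a steadily growing KV cache. The argument hinges on KV-cache sharing: because the cache size scales only with committed sequence length and not with recurrence depth $r$, the per-pass attention cost is dominated by reading the common cache for either sampler, and the extra $W-1$ parallel queries in diffusion forcing fit under the same compute budget precisely because $W \leq L_\star$. A secondary subtlety is the boundary case $r' = r$, in which Algorithm~\ref{alg:simple_diffusion} degenerates to ordinary autoregressive sampling and the inequality collapses to an equality; I would carry $r' < r$ explicitly as the operating assumption, and note that the gap $w_{\text{DF}}(T) - w_{\text{AR}}(T)$ scales with $W$ once the wavefront is saturated, quantifying the width advantage enjoyed by the diffusion-forcing sampler.
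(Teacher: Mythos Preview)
Your proposal is correct and tracks the same reasoning the paper offers, though the paper itself gives no formal proof of this informal theorem: it is justified only by the subsequent Remark, which argues that shared parameters and KV states make the I/O cost of processing a $W$-token wavefront asymptotically equivalent to a single token, so that diffusion forcing attains equal depth and greater width within the same runtime. Your write-up is considerably more fleshed out than the paper's---the explicit timeline alignment via $N(T)$, the entry/exit bookkeeping from Algorithm~\ref{alg:simple_diffusion}, and the boundary discussion of $r'=r$ are all yours---but the core mechanism (KV-sharing plus $W\le L_\star$ to equalize per-step cost, then the wavefront structure to force $w_{\text{DF}}>w_{\text{AR}}$) is the same.
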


\vspace{-.1cm}
\begin{remark}
Since model parameters and KV states are shared, the I/O cost of processing multiple tokens is asymptotically equivalent to processing a single token, enabling increased token generation within identical runtime constraints. At each decoding step, an expanded wavefront enables greater width scaling, providing superior expressiveness compared to autoregressive decoding. Empirically, since maximum recurrence depth rarely exceeds $r \approx 100$, the condition $W \leq L_\star$ typically holds.
\end{remark}

\section{Experimental Evaluation}\label{sec:experiments}
To assess whether our method really accelerates generation, we compare our sampler against an equally optimized implementation of standard autoregressive sampling, both evaluated with a batch size of 1. Extensions to larger batch sizes are conceivable but fall outside the scope of this study, see additional discussion in \cref{app:batch_engine} 

\begin{table}[t]
\small 
\centering
\resizebox{\linewidth}{!}{\setlength{\tabcolsep}{1mm}{
\begin{tabular}{l*{8}{c}}
\toprule
\small 
\multirow{2}{*}{Sampler} & \multicolumn{2}{c}{GSM8K} & \multicolumn{2}{c}{MATH500} & \multicolumn{2}{c}{HumanEval} & \multicolumn{2}{c}{MBPP}  \\
\cmidrule(lr){2-3} \cmidrule(lr){4-5} \cmidrule(lr){6-7} \cmidrule(lr){8-9}
& Acc & t/s & Acc & t/s & Acc & t/s & Acc & t/s \\
\midrule
Static AR ({\tiny$r=32$})  & 41.77\% & 36.1 & 17.60\% & 6.4 & 22.56\% & 13.5 & 31.60\% & 15.3 \\ 
\midrule
Static AR ({\tiny$r=4$})  & 1.59\% & 312.9 & 3.20\% & 18.6 & 0.61\% & 244.1 & 1.40\% & 49.6 \\ 
Static AR ({\tiny$r=8$})  & 31.61\% & 137.5 & 14.80\% & 23.1 & 21.34\% & 61.7 & 27.40\% & 57.2 \\ 
Static AR ({\tiny$r=64$})  & 42.15\% & 18.2 & 18.60\% & 3.4 & 22.56\% & 7.3 & 30.20\% & 7.6 \\ 
\midrule
Adaptive Compute AR & 42.23\% & 66.9 & 18.20\% & 12.2 & 21.95\% & 26.1 & 30.20\% & 29.5 \\ 
Speculative Decoding AR  & 42.76\% & 69.5 & 17.80\% & 13.4 & 20.12\% & 27.5 & 30.60\% & 31.6 \\ 
\midrule 
Diff. Sampler ({\tiny$r'=2,\beta_t=0.5$})  & 40.71\% & 182.2 & 17.60\% & 35.9  & 20.12\% & 67.4 & 27.80\% & 92.3 \\ 

Diff. Sampler ({\tiny$r'=4,\beta_t=0$}) & 42.08\% & 157.3 & 18.00\% & 30.3 & 20.12\% & 64.9 & 31.00\% & 70.2 \\ 
\midrule
Relative Diff to AR ({\tiny$r=32$})& +0.31 & \textbf{4.36x} & +0.40 & \textbf{4.73x} & -2.44 & \textbf{4.81x} & -0.60 & \textbf{4.59x} \\
\bottomrule
\end{tabular}}}
\caption{Performance comparison of autoregressive (AR) and diffusion samplers for the \textit{Huginn-0125} model using a comparable backend (batch size 1, \texttt{transformers} with dynamic KV caching, no further inference optimizations). For both samplers, we record the total evaluation time divided by the number of samples. ``Acc'' denotes task accuracy, and ``t/s'' denotes the median of tokens/second measurements for all samples in the task.
}
\label{tab:sampler_comparison}

\end{table}

\begin{figure}[b]
    \centering
    \includegraphics[width=0.49\linewidth]{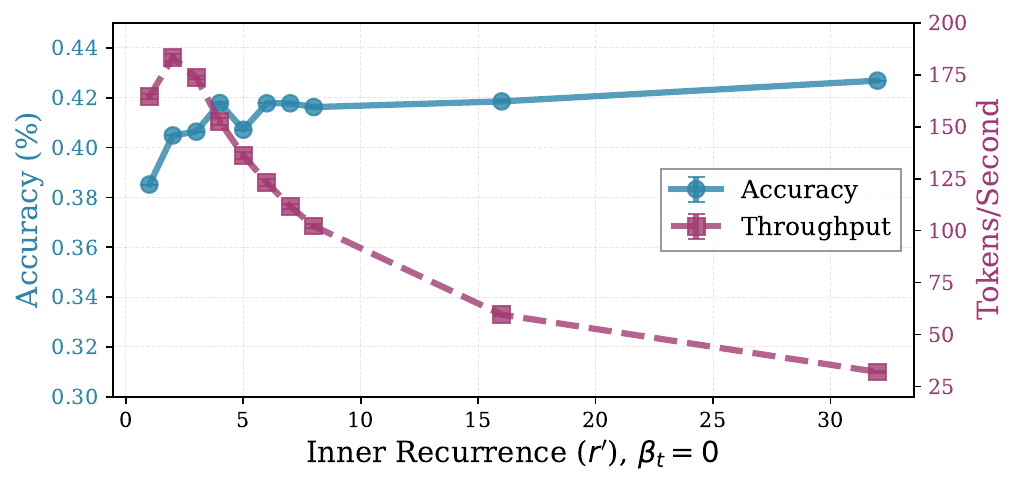}
    \includegraphics[width=0.49\linewidth]{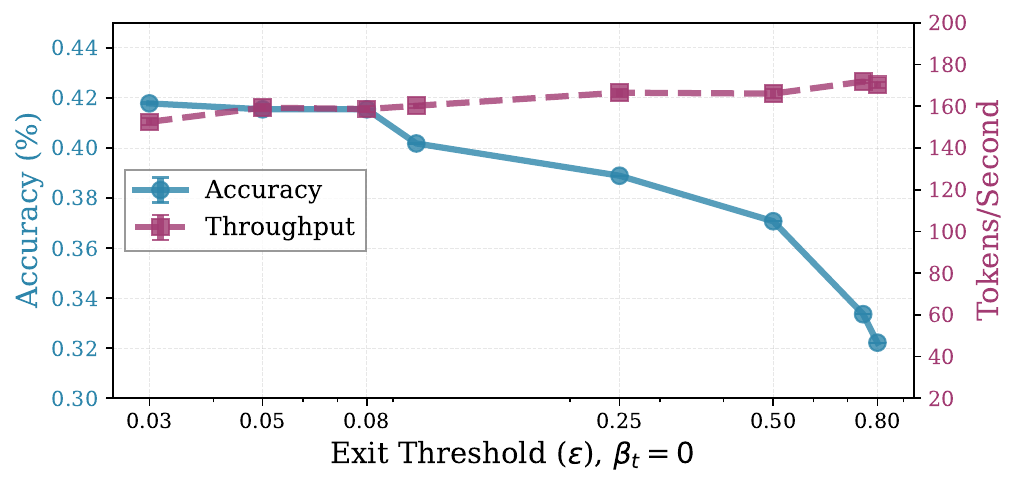}
    \caption{Trade-off between accuracy and speed on GSM8k under different hyperparameter choices. \textbf{Left:} Effect of increasing inner recurrence $r'$. Inner recurrence stabilizes the sampling, increasing accuracy at the cost of throughput. \textbf{Right:} Effect of varying the exit threshold $\varepsilon$. Modulating the exit threshold most directly trades off throughput and accuracy.}
    \label{fig:hyperparam_sweeps1}
\end{figure}

We evaluate the 4 generative benchmarks (GSM8K, MATH500, HumanEval and MBPP) also evaluated in \citep{geiping_scaling_2025}, which we rerun using our sampler and compare against a number of baselines.
Aside from the \textbf{static, autoregressive baseline} (static AR), at different recurrence steps, we also compare against the \textbf{adaptive compute} sampler of the original work, which still samples token-by-token, but exits the recurrence at every token, once the difference in latent space is small enough. We tune this sampler, finding that its hyperparameter, the threshold $\varepsilon$ is similar to the diffusion sampler.

Finally, we also compare against a heavily tuned \textbf{self-speculative decoding baseline}. It was observed in \citet{geiping_scaling_2025} that recurrent-depth models can be natively used as their own draft models, using fewer steps to draft. We find that drafting 4 tokens into the future, each with 4 draft steps is optimal for the \textit{Huginn-0125} checkpoint on GSM8k. 

We implement all samplers in comparable Hugging Face \texttt{transformers} implementations with dynamic KV caching and we measure mean accuracy and median tokens per second, computed over queries from each benchmark. All timings are obtained from CUDA event measurements on sandboxed A100-40GB GPUs. If not otherwise mentioned, we default to conservative settings for the sampler, always setting an exit threshold of $\varepsilon=0.03$, $\beta_t=0$, $\eta=0.1$ and $r'=4$, for a maximum wavefront size of $128$.

\subsection{Benchmark Results.}
We summarize our findings in \cref{tab:sampler_comparison}.
We find that on all benchmarks, executing the parallelized sampler leads to significant speedups of around 5x, with only minor trade-offs in generation quality of around 1\%, depending on the task, owing to the trade-off set by our default hyperparameters. In \cref{tab:sampler_comparison_2} we repeat all benchmarks for two additional model checkpoints, the SWA model also released in \citet{geiping_scaling_2025}, and a math variant, that we finetuned on the MetaMath dataset \citep{yu_metamath_2023}. Even though these model variants differ noticeably in their benchmark scores, they show similar gains and trade-offfs when using the diffusion sampler.



\begin{table}[t]
\vspace{-1em}
\centering
\small 
\begin{tabular}{l*{8}{c}}
\toprule
\multirow{2}{*}{Sampler} & \multicolumn{2}{c}{GSM8K} & \multicolumn{2}{c}{Minerva Math} & \multicolumn{2}{c}{HumanEval} & \multicolumn{2}{c}{MBPP}  \\
\cmidrule(lr){2-3} \cmidrule(lr){4-5} \cmidrule(lr){6-7} \cmidrule(lr){8-9}
& Acc & Time & Acc & Time & Acc & Time & Acc & Time \\
\midrule
\multicolumn{8}{c}{Huginn-0125} \\
\midrule
Static AR ({\tiny$r=32$})  & 41.77\% & 36.1 & 12.98\% & 21.0 & 22.56\% & 13.5 & 31.60\% & 15.3 \\ 
Diff. Sampler ({\tiny$r'=4,\beta_t=0$})  & 42.08\% & 157.3 & 13.06\% & 96.0 & 20.12\% & 64.9 & 31.00\% & 70.2 \\ 
\midrule
\multicolumn{8}{c}{SWA Model Variant} \\
\midrule
Static AR ({\tiny$r=32$})  & 47.99\% & 36.2 & 14.86\% & 22.1 & 23.78\% & 14.9 & 31.20\% & 11.8 \\ 
Diff. Sampler ({\tiny$r'=4,\beta_t=0$})  & 47.08\% & 143.1 & 14.52\% & 101.4 & 23.78\% & 71.2 & 29.20\% & 59.7 \\ 
\midrule
\multicolumn{8}{c}{Math-Finetuned Model} \\
\midrule
Static AR ({\tiny$r=32$})  & 58.91\% & 29.8 & 22.20\% & 7.9 & 17.07\% & 11.5 & 28.80\% & 11.2 \\ 
Diff. Sampler ({\tiny$r'=4,\beta_t=0$}) & 58.45\% & 144.1 & 21.40\% & 39.8 & 15.24\% & 47.9 & 27.60\% & 57.1 \\ 
\bottomrule
\end{tabular}
\caption{Hyperparameters remain stable across different model variants. For example, both the weight-averaged checkpoint from the original work and the model finetuned on MetaMath for this study exhibit consistent speed gains in the range of 4–5× and accuracy deviations within 0.5–1\%, even when baseline values change.
}
\label{tab:sampler_comparison_2}
\end{table}

\begin{figure}
    \centering
    \includegraphics[width=0.49\linewidth]{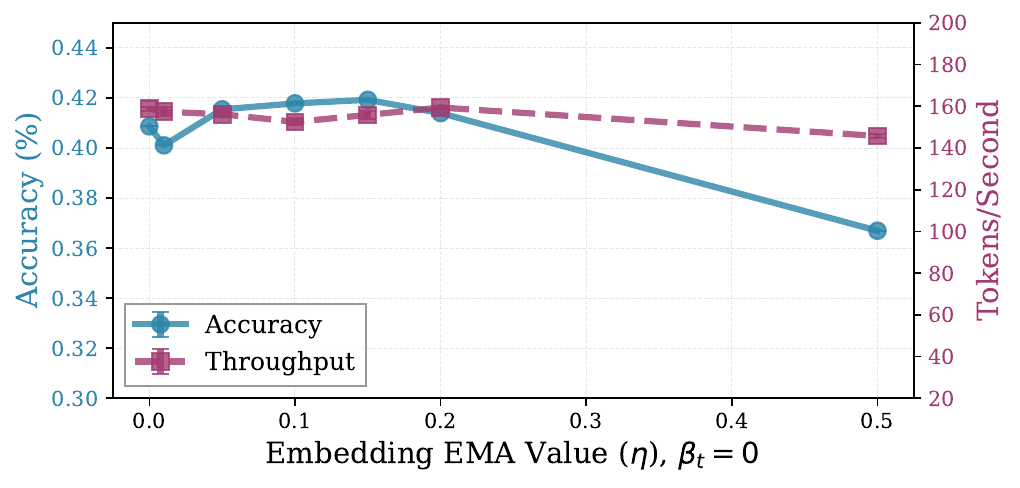}
    \includegraphics[width=0.49\linewidth]{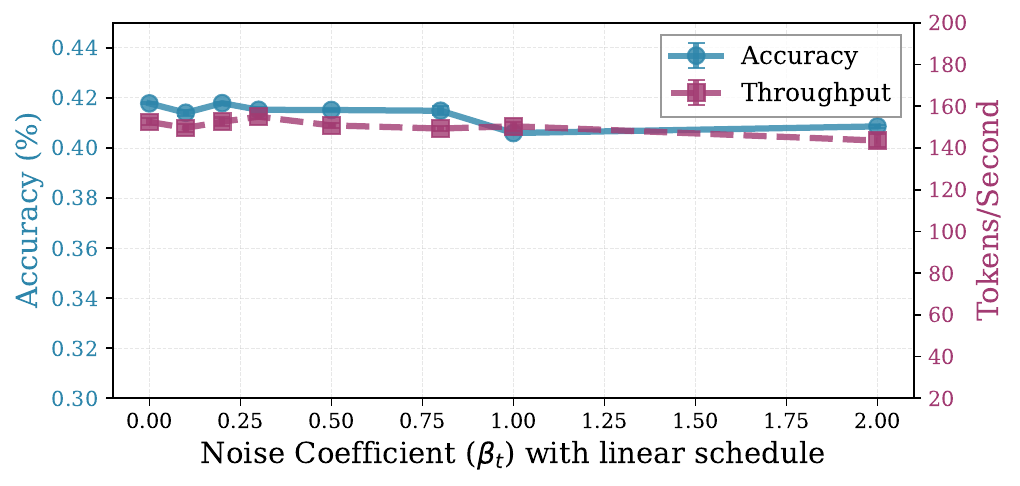}
    \caption{\textbf{Left:} Scaling the amount of momentum $\eta$ in the conditioning., showing that small, but non-zero $\eta$ values are optimal. \textbf{Right:} Scaling the amount of noise added during inference for $r'=4$, scheduled linearly in the number of recurrence steps, also measured on GSM8k. At $r'=4$, adding noise is not optimal. We plot the full spectrum of $r'$ to $\beta_t$ in \cref{fig:pareto}.}
    \label{fig:hyperparam_sweeps3}
    \vspace{-1.5em}
\end{figure}

\subsection{Variants and Hyperparameters}

\textbf{Hyperparameter Choices.} We show the trade-off curves arising when varying the inner recurrence $r'$ and the exit threshold $\varepsilon$ in \cref{fig:hyperparam_sweeps1} for two settings of noise $\beta_t$, finding that we can effectively trade-off additional generation speed against minor losses in accuracy. We further vary the embedding EMA $\eta$ and the noise schedule in \cref{fig:hyperparam_sweeps3}, showing that the sampler is robust to a broad range of settings for both options, although upsides are also limited.

In \cref{fig:pareto}, we sweep a range of values for $r'$ and $\beta_t$, showing that, on average, more noise is helpful if the model takes fewer inner recurrence steps. In \cref{fig:hyperparam_sweeps5} (left), we confirm that larger maximum wavefront sizes (i.e. the number of tokens that is modified at once in the adaptive sampler) allow for better parallelization. For the tested A100 GPU, the optimal maximal wavefront size is between 64 and 128, although this is likely accelerator-specific. 

\paragraph{Moving Forward Multiple Steps.}
In principle, there is no limitation of only advancing one token at a time, and so we can consider \textit{headways} greater than 1, however, for these, we have no prior position to decode from, so we can only fill these positions with random tokens, or a particular padding token. And, given that the model is still causal, it will take several steps for sequential dependencies to be resolved, even if we sample a large headway in every step. We experiment with headways greater than one, but while interestingly stable, this accelerates the speed of the sampler only marginally at a cost to accuracy, see \cref{fig:hyperparam_sweeps5}, right.

\begin{figure}
    \centering
    \includegraphics[width=\linewidth]{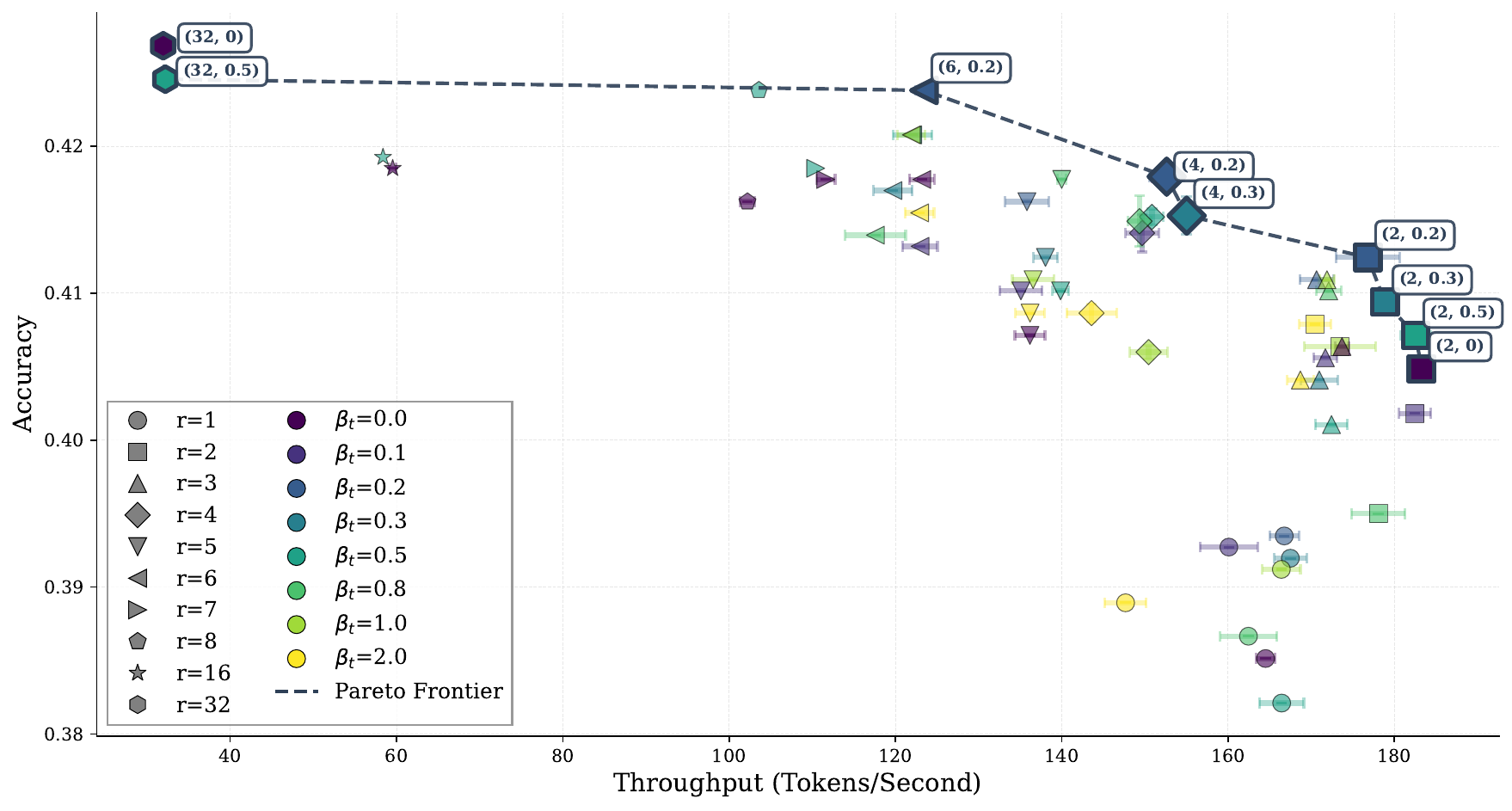}
    \caption{The Pareto Curve of Accuracy and Throughput on GSM8k spanned by varying inner recurrence and noise hyperparameter pairs $(r', \beta_t)$. Adding moderate amounts of noise, e.g. $\beta_t=0.2$ is dominating runs with no noise added. Note also the scale of y-axis, as even at the rightmost part of the frontier, we are observing accuracy losses of only $2\%$.}
    \label{fig:pareto}
\end{figure}

\begin{figure}
    \centering
    \includegraphics[width=0.49\linewidth]{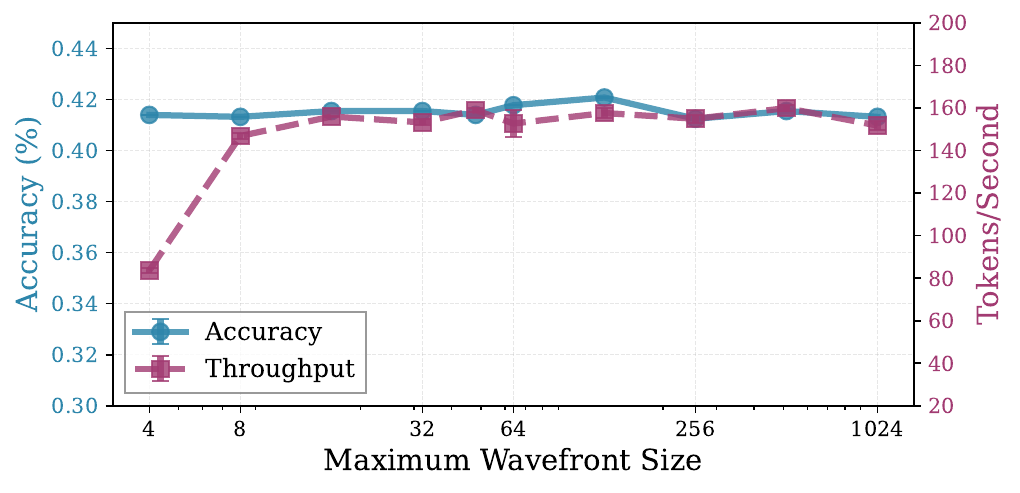}
    \includegraphics[width=0.49\linewidth]{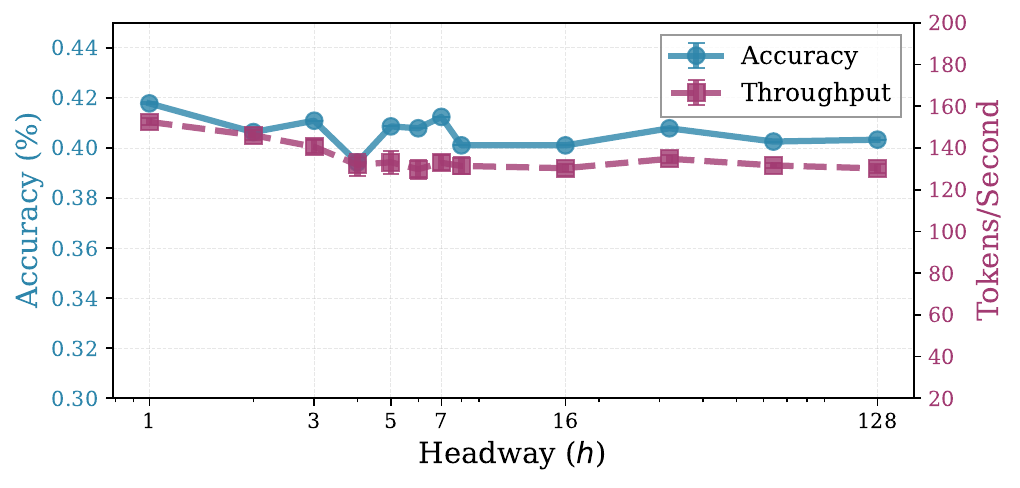}
    \caption{Impact of Additional Hyperparameter Choices on GSM8k. \textbf{Left:} Size of the wavefront. Increasing wavefront size up to a value around 64-128 appears optimal. We note that the optimal wavefront size is also likely to be accelerator-specific. \textbf{Right:}  Amount of headway. Larger amounts of headway than 1, i.e. advancing the sampler more than 1 token per step, do not seem to materialize practical speedups for the studied model. }
    \label{fig:hyperparam_sweeps5}
\end{figure}

\section{Conclusions: Are Recurrent-depth Transformers secretly continuous language diffusion models?}  

We have shown that, surprisingly, diffusion forcing samplers can be directly applied to parallelize the inference of existing recurrent-depth language models, which we justify theoretically, and implement in practice, leading to five times faster single-sequence inference, even on reasoning and coding benchmark questions. Interestingly, we could also interpret this relationship in the opposite direction, namely that the recurrent-depth models of \citet{geiping_scaling_2025} \textit{are} effectively continuous latent language diffusion models, just trained with an unusual objective, namely truncated unrolling. This would imply that unrolling objectives could be competitive objectives for future language diffusion models. 
However, while this comparison is possible, the recurrent models like \textit{Huginn-0125} are still causal, at least without further training, and so this advantage of diffusion modeling remains elusive.





\subsubsection*{Acknowledgments}
JG acknowledges the support of the Hector foundation and the Max Planck Computing and Data Facility (MPCDF), especially the compute cluster Raven. We are especially thankful that the MPCDF team was able to address the overheating issues that coincided with the large-scale deployment of the evaluation of this sampling algorithm to the Raven compute cluster. GS acknowledges the support of the
International Max Planck Research School for Intelligent Systems (IMPRS-IS).


\subsubsection*{Reproducibility Statement}
We provide the complete sampling algorithm we describe, including all options at {\small \url{https://github.com/seal-rg/recurrent-pretraining}}. We provide experimental details in \cref{sec:experiments} and provide further ablations and variants in the appendix. If not otherwise mentioned, all measured values are based on at least 5 repeated experiments. All timing are measured using CUDA events on GPUs of equal power, and are comparable to timings in the same table or figure.  
{\small
\bibliography{nlp_auto_references_do_not_edit,manual_references}

\begin{thebibliography}{74}
\providecommand{\natexlab}[1]{#1}

\bibitem[{Abnar et~al.(2023)Abnar, Saremi, Dinh, Wilson, Bautista, Huang, Thilak, Littwin, Gu, Susskind, and Bengio}]{abnar_adaptivity_2023}
Samira Abnar, Omid Saremi, Laurent Dinh, Shantel Wilson, Miguel~Angel Bautista, Chen Huang, Vimal Thilak, Etai Littwin, Jiatao Gu, Josh Susskind, and Samy Bengio. 2023.
\newblock \href {https://doi.org/10.48550/arXiv.2310.08866} {Adaptivity and {{Modularity}} for {{Efficient Generalization Over Task Complexity}}}.
\newblock \emph{arxiv:2310.08866[cs]}.

\bibitem[{Amari(1972)}]{amari_learning_1972}
S.-I. Amari. 1972.
\newblock \href {https://doi.org/10.1109/T-C.1972.223477} {Learning {{Patterns}} and {{Pattern Sequences}} by {{Self-Organizing Nets}} of {{Threshold Elements}}}.
\newblock \emph{IEEE Transactions on Computers}, C-21(11):1197--1206.

\bibitem[{Anil et~al.(2022)Anil, Pokle, Liang, Treutlein, Wu, Bai, Kolter, and Grosse}]{anil_path_2022}
Cem Anil, Ashwini Pokle, Kaiqu Liang, Johannes Treutlein, Yuhuai Wu, Shaojie Bai, J.~Zico Kolter, and Roger~Baker Grosse. 2022.
\newblock \href {https://openreview.net/forum?id=kgT6D7Z4Xv9} {Path {{Independent Equilibrium Models Can Better Exploit Test-Time Computation}}}.
\newblock In \emph{Advances in {{Neural Information Processing Systems}}}.

\bibitem[{Arriola et~al.(2025)Arriola, Gokaslan, Chiu, Yang, Qi, Han, Sahoo, and Kuleshov}]{arriola_block_2025}
Marianne Arriola, Aaron Gokaslan, Justin~T. Chiu, Zhihan Yang, Zhixuan Qi, Jiaqi Han, Subham~Sekhar Sahoo, and Volodymyr Kuleshov. 2025.
\newblock \href {https://doi.org/10.48550/arXiv.2503.09573} {Block {{Diffusion}}: {{Interpolating Between Autoregressive}} and {{Diffusion Language Models}}}.
\newblock \emph{arxiv:2503.09573[cs]}.

\bibitem[{Austin et~al.(2021)Austin, Johnson, Ho, Tarlow, and {van den Berg}}]{austin_structured_2021}
Jacob Austin, Daniel~D. Johnson, Jonathan Ho, Daniel Tarlow, and Rianne {van den Berg}. 2021.
\newblock \href {https://proceedings.neurips.cc/paper/2021/hash/958c530554f78bcd8e97125b70e6973d-Abstract.html} {Structured {{Denoising Diffusion Models}} in {{Discrete State-Spaces}}}.
\newblock In \emph{Advances in {{Neural Information Processing Systems}}}, volume~34, pages 17981--17993. Curran Associates, Inc.

\bibitem[{Bae et~al.(2024)Bae, Fisch, Harutyunyan, Ji, Kim, and Schuster}]{bae_relaxed_2024}
Sangmin Bae, Adam Fisch, Hrayr Harutyunyan, Ziwei Ji, Seungyeon Kim, and Tal Schuster. 2024.
\newblock \href {https://doi.org/10.48550/arXiv.2410.20672} {Relaxed {{Recursive Transformers}}: {{Effective Parameter Sharing}} with {{Layer-wise LoRA}}}.

\bibitem[{Bansal et~al.(2023)Bansal, Chu, Schwarzschild, Sengupta, Goldblum, Geiping, and Goldstein}]{bansal_universal_2023-1}
Arpit Bansal, Hong-Min Chu, Avi Schwarzschild, Roni Sengupta, Micah Goldblum, Jonas Geiping, and Tom Goldstein. 2023.
\newblock \href {https://openreview.net/forum?id=pzpWBbnwiJ} {Universal {{Guidance}} for {{Diffusion Models}}}.
\newblock In \emph{The {{Twelfth International Conference}} on {{Learning Representations}}}.

\bibitem[{Bansal et~al.(2022)Bansal, Schwarzschild, Borgnia, Emam, Huang, Goldblum, and Goldstein}]{bansal_end--end_2022}
Arpit Bansal, Avi Schwarzschild, Eitan Borgnia, Zeyad Emam, Furong Huang, Micah Goldblum, and Tom Goldstein. 2022.
\newblock \href {https://openreview.net/forum?id=PPjSKy40XUB} {End-to-end {{Algorithm Synthesis}} with {{Recurrent Networks}}: {{Extrapolation}} without {{Overthinking}}}.
\newblock In \emph{Advances in {{Neural Information Processing Systems}}}.

\bibitem[{Bear et~al.(2024)Bear, {Pr{\"u}gel-Bennett}, and Hare}]{bear_rethinking_2024}
Jay Bear, Adam {Pr{\"u}gel-Bennett}, and Jonathon Hare. 2024.
\newblock \href {https://doi.org/10.48550/arXiv.2410.23451} {Rethinking {{Deep Thinking}}: {{Stable Learning}} of {{Algorithms}} using {{Lipschitz Constraints}}}.
\newblock \emph{arxiv:2410.23451[cs]}.

\bibitem[{Braitenberg(1986)}]{braitenberg_vehicles_1986}
Valentino Braitenberg. 1986.
\newblock \emph{Vehicles: {{Experiments}} in Synthetic Psychology}.
\newblock MIT press.

\bibitem[{Cai et~al.(2024)Cai, Li, Geng, Peng, Lee, Chen, and Dao}]{cai_medusa_2024}
Tianle Cai, Yuhong Li, Zhengyang Geng, Hongwu Peng, Jason~D. Lee, Deming Chen, and Tri Dao. 2024.
\newblock \href {https://doi.org/10.48550/arXiv.2401.10774} {Medusa: {{Simple LLM Inference Acceleration Framework}} with {{Multiple Decoding Heads}}}.
\newblock \emph{arxiv:2401.10774[cs]}.

\bibitem[{Chen et~al.(2024{\natexlab{a}})Chen, Monso, Du, Simchowitz, Tedrake, and Sitzmann}]{chen_diffusion_2024-1}
Boyuan Chen, Diego~Marti Monso, Yilun Du, Max Simchowitz, Russ Tedrake, and Vincent Sitzmann. 2024{\natexlab{a}}.
\newblock \href {https://doi.org/10.48550/arXiv.2407.01392} {Diffusion {{Forcing}}: {{Next-token Prediction Meets Full-Sequence Diffusion}}}.
\newblock \emph{arxiv:2407.01392[cs]}.

\bibitem[{Chen et~al.(2024{\natexlab{b}})Chen, Monso, Du, Simchowitz, Tedrake, and Sitzmann}]{chen_diffusion_2024}
Boyuan Chen, Diego~Marti Monso, Yilun Du, Max Simchowitz, Russ Tedrake, and Vincent Sitzmann. 2024{\natexlab{b}}.
\newblock \href {https://arxiv.org/abs/2407.01392} {Diffusion {{Forcing}}: {{Next-token Prediction Meets Full-Sequence Diffusion}}}.
\newblock \emph{arxiv:2407.01392[cs]}.

\bibitem[{Chen et~al.(2022)Chen, Zhang, and Hinton}]{chen_analog_2022}
Ting Chen, Ruixiang Zhang, and Geoffrey Hinton. 2022.
\newblock \href {https://openreview.net/forum?id=3itjR9QxFw} {Analog {{Bits}}: {{Generating Discrete Data}} using {{Diffusion Models}} with {{Self-Conditioning}}}.
\newblock In \emph{The {{Eleventh International Conference}} on {{Learning Representations}}}.

\bibitem[{Chen et~al.(2024{\natexlab{c}})Chen, May, Svirschevski, Huang, Ryabinin, Jia, and Chen}]{chen2024sequoia}
Zhuoming Chen, Avner May, Ruslan Svirschevski, Yuhsun Huang, Max Ryabinin, Zhihao Jia, and Beidi Chen. 2024{\natexlab{c}}.
\newblock Sequoia: Scalable, robust, and hardware-aware speculative decoding.
\newblock \emph{arXiv preprint arXiv:2402.12374}.

\bibitem[{Cheng and Durme(2024)}]{cheng_compressed_2024}
Jeffrey Cheng and Benjamin~Van Durme. 2024.
\newblock \href {https://doi.org/10.48550/arXiv.2412.13171} {Compressed {{Chain}} of {{Thought}}: {{Efficient Reasoning Through Dense Representations}}}.
\newblock \emph{arxiv:2412.13171[cs]}.

\bibitem[{Csord{\'a}s et~al.(2024)Csord{\'a}s, Irie, Schmidhuber, Potts, and Manning}]{csordas_moeut_2024}
R{\'o}bert Csord{\'a}s, Kazuki Irie, J{\"u}rgen Schmidhuber, Christopher Potts, and Christopher~D. Manning. 2024.
\newblock \href {https://openreview.net/forum?id=ZxVrkm7Bjl\&noteId=xzoi2mTLOI} {{{MoEUT}}: {{Mixture-of-Experts Universal Transformers}}}.
\newblock In \emph{The {{Thirty-eighth Annual Conference}} on {{Neural Information Processing Systems}}}.

\bibitem[{DeepMind(2025)}]{google_deepmind_gemini_2025}
Google DeepMind. 2025.
\newblock \href {https://blog.google/technology/google-deepmind/gemini-diffusion/} {Gemini {{Diffusion}}}.

\bibitem[{{DeepSeek-AI} et~al.(2025){DeepSeek-AI}, Guo, Yang, Zhang, Song, Zhang, Xu, Zhu, Ma, Wang, Bi, Zhang, Yu, Wu, Wu, Gou, Shao, Li, Gao, Liu, Xue, Wang, Wu, Feng, Lu, Zhao, Deng, Zhang, Ruan, Dai, Chen, Ji, Li, Lin, Dai, Luo, Hao, Chen, Li, Zhang, Bao, Xu, Wang, Ding, Xin, Gao, Qu, Li, Guo, Li, Wang, Chen, Yuan, Qiu, Li, Cai, Ni, Liang, Chen, Dong, Hu, Gao, Guan, Huang, Yu, Wang, Zhang, Zhao, Wang, Zhang, Xu, Xia, Zhang, Zhang, Tang, Li, Wang, Li, Tian, Huang, Zhang, Wang, Chen, Du, Ge, Zhang, Pan, Wang, Chen, Jin, Chen, Lu, Zhou, Chen, Ye, Wang, Yu, Zhou, Pan, Li, Zhou, Wu, Ye, Yun, Pei, Sun, Wang, Zeng, Zhao, Liu, Liang, Gao, Yu, Zhang, Xiao, An, Liu, Wang, Chen, Nie, Cheng, Liu, Xie, Liu, Yang, Li, Su, Lin, Li, Jin, Shen, Chen, Sun, Wang, Song, Zhou, Wang, Shan, Li, Wang, Wei, Zhang, Xu, Li, Zhao, Sun, Wang, Yu, Zhang, Shi, Xiong, He, Piao, Wang, Tan, Ma, Liu, Guo, Ou, Wang, Gong, Zou, He, Xiong, Luo, You, Liu, Zhou, Zhu, Xu, Huang, Li, Zheng, Zhu, Ma, Tang, Zha, Yan, Ren, Ren, Sha, Fu, Xu, Xie,
  Zhang, Hao, Ma, Yan, Wu, Gu, Zhu, Liu, Li, Xie, Song, Pan, Huang, Xu, Zhang, and Zhang}]{deepseek-ai_deepseek-r1_2025}
{DeepSeek-AI}, Daya Guo, Dejian Yang, Haowei Zhang, Junxiao Song, Ruoyu Zhang, Runxin Xu, Qihao Zhu, Shirong Ma, Peiyi Wang, Xiao Bi, Xiaokang Zhang, Xingkai Yu, Yu~Wu, Z.~F. Wu, Zhibin Gou, Zhihong Shao, Zhuoshu Li, Ziyi Gao, and 181 others. 2025.
\newblock \href {https://doi.org/10.48550/arXiv.2501.12948} {{{DeepSeek-R1}}: {{Incentivizing Reasoning Capability}} in {{LLMs}} via {{Reinforcement Learning}}}.
\newblock \emph{arxiv:2501.12948[cs]}.

\bibitem[{Dehghani et~al.(2019)Dehghani, Gouws, Vinyals, Uszkoreit, and Kaiser}]{dehghani_universal_2019}
Mostafa Dehghani, Stephan Gouws, Oriol Vinyals, Jakob Uszkoreit, and {\L}ukasz Kaiser. 2019.
\newblock \href {https://doi.org/10.48550/arXiv.1807.03819} {Universal {{Transformers}}}.
\newblock \emph{arxiv:1807.03819[cs, stat]}.

\bibitem[{Dieleman et~al.(2022)Dieleman, Sartran, Roshannai, Savinov, Ganin, Richemond, Doucet, Strudel, Dyer, Durkan, Hawthorne, Leblond, Grathwohl, and Adler}]{dieleman_continuous_2022}
Sander Dieleman, Laurent Sartran, Arman Roshannai, Nikolay Savinov, Yaroslav Ganin, Pierre~H. Richemond, Arnaud Doucet, Robin Strudel, Chris Dyer, Conor Durkan, Curtis Hawthorne, R{\'e}mi Leblond, Will Grathwohl, and Jonas Adler. 2022.
\newblock \href {https://doi.org/10.48550/arXiv.2211.15089} {Continuous diffusion for categorical data}.
\newblock \emph{arxiv:2211.15089[cs]}.

\bibitem[{Douglas and Martin(2004)}]{douglas_neuronal_2004}
Rodney~J. Douglas and Kevan A.~C. Martin. 2004.
\newblock \href {https://doi.org/10.1146/annurev.neuro.27.070203.144152} {Neuronal circuits of the neocortex}.
\newblock \emph{Annual Review of Neuroscience}, 27:419--451.

\bibitem[{Fan et~al.(2025)Fan, Du, Ramchandran, and Lee}]{fan_looped_2025}
Ying Fan, Yilun Du, Kannan Ramchandran, and Kangwook Lee. 2025.
\newblock \href {https://openreview.net/forum?id=2edigk8yoU} {Looped {{Transformers}} for {{Length Generalization}}}.
\newblock In \emph{The {{Thirteenth International Conference}} on {{Learning Representations}}}.

\bibitem[{Gatmiry et~al.(2024)Gatmiry, Saunshi, Reddi, Jegelka, and Kumar}]{gatmiry_can_2024}
Khashayar Gatmiry, Nikunj Saunshi, Sashank~J. Reddi, Stefanie Jegelka, and Sanjiv Kumar. 2024.
\newblock \href {https://doi.org/10.48550/arXiv.2410.08292} {Can {{Looped Transformers Learn}} to {{Implement Multi-step Gradient Descent}} for {{In-context Learning}}?}

\bibitem[{Geiping et~al.(2025)Geiping, McLeish, Jain, Kirchenbauer, Singh, Bartoldson, Kailkhura, Bhatele, and Goldstein}]{geiping_scaling_2025}
Jonas Geiping, Sean McLeish, Neel Jain, John Kirchenbauer, Siddharth Singh, Brian~R. Bartoldson, Bhavya Kailkhura, Abhinav Bhatele, and Tom Goldstein. 2025.
\newblock \href {https://doi.org/10.48550/arXiv.2502.05171} {Scaling up {{Test-Time Compute}} with {{Latent Reasoning}}: {{A Recurrent Depth Approach}}}.
\newblock \emph{arxiv:2502.05171[cs]}.

\bibitem[{Gers and Schmidhuber(2000)}]{gers_recurrent_2000}
F.A. Gers and J.~Schmidhuber. 2000.
\newblock \href {https://doi.org/10.1109/IJCNN.2000.861302} {Recurrent nets that time and count}.
\newblock In \emph{Proceedings of the {{IEEE-INNS-ENNS International Joint Conference}} on {{Neural Networks}}. {{IJCNN}} 2000. {{Neural Computing}}: {{New Challenges}} and {{Perspectives}} for the {{New Millennium}}}, volume~3, pages 189--194 vol.3.

\bibitem[{Giannou et~al.(2023)Giannou, Rajput, Sohn, Lee, Lee, and Papailiopoulos}]{giannou_looped_2023}
Angeliki Giannou, Shashank Rajput, Jy-Yong Sohn, Kangwook Lee, Jason~D. Lee, and Dimitris Papailiopoulos. 2023.
\newblock \href {https://proceedings.mlr.press/v202/giannou23a.html} {Looped {{Transformers}} as {{Programmable Computers}}}.
\newblock In \emph{Proceedings of the 40th {{International Conference}} on {{Machine Learning}}}, pages 11398--11442. PMLR.

\bibitem[{Gong et~al.(2025{\natexlab{a}})Gong, Agarwal, Zhang, Ye, Zheng, Li, An, Zhao, Bi, Han, Peng, and Kong}]{gong_scaling_2025}
Shansan Gong, Shivam Agarwal, Yizhe Zhang, Jiacheng Ye, Lin Zheng, Mukai Li, Chenxin An, Peilin Zhao, Wei Bi, Jiawei Han, Hao Peng, and Lingpeng Kong. 2025{\natexlab{a}}.
\newblock \href {https://doi.org/10.48550/arXiv.2410.17891} {Scaling {{Diffusion Language Models}} via {{Adaptation}} from {{Autoregressive Models}}}.
\newblock \emph{arxiv:2410.17891[cs]}.

\bibitem[{Gong et~al.(2025{\natexlab{b}})Gong, Zhang, Zheng, Gu, Jaitly, Kong, and Zhang}]{gong_diffucoder_2025}
Shansan Gong, Ruixiang Zhang, Huangjie Zheng, Jiatao Gu, Navdeep Jaitly, Lingpeng Kong, and Yizhe Zhang. 2025{\natexlab{b}}.
\newblock \href {https://doi.org/10.48550/arXiv.2506.20639} {{{DiffuCoder}}: {{Understanding}} and {{Improving Masked Diffusion Models}} for {{Code Generation}}}.
\newblock \emph{arxiv:2506.20639[cs]}.

\bibitem[{Graves(2017)}]{graves_adaptive_2017}
Alex Graves. 2017.
\newblock \href {https://doi.org/10.48550/arXiv.1603.08983} {Adaptive {{Computation Time}} for {{Recurrent Neural Networks}}}.
\newblock \emph{arxiv:1603.08983[cs]}.

\bibitem[{Graves et~al.(2025)Graves, Srivastava, Atkinson, and Gomez}]{graves_bayesian_2025}
Alex Graves, Rupesh~Kumar Srivastava, Timothy Atkinson, and Faustino Gomez. 2025.
\newblock \href {https://doi.org/10.48550/arXiv.2308.07037} {Bayesian {{Flow Networks}}}.
\newblock \emph{arxiv:2308.07037[cs]}.

\bibitem[{Graves et~al.(2014)Graves, Wayne, and Danihelka}]{graves_neural_2014}
Alex Graves, Greg Wayne, and Ivo Danihelka. 2014.
\newblock \href {https://arxiv.org/abs/1410.5401} {Neural {{Turing Machines}}}.
\newblock \emph{arxiv:1410.5401[cs]}.

\bibitem[{Han et~al.(2023)Han, Kumar, and Tsvetkov}]{han_ssd-lm_2023}
Xiaochuang Han, Sachin Kumar, and Yulia Tsvetkov. 2023.
\newblock \href {https://doi.org/10.18653/v1/2023.acl-long.647} {{{SSD-LM}}: {{Semi-autoregressive Simplex-based Diffusion Language Model}} for {{Text Generation}} and {{Modular Control}}}.
\newblock In \emph{Proceedings of the 61st {{Annual Meeting}} of the {{Association}} for {{Computational Linguistics}} ({{Volume}} 1: {{Long Papers}})}, pages 11575--11596, Toronto, Canada. Association for Computational Linguistics.

\bibitem[{Hao et~al.(2024)Hao, Sukhbaatar, Su, Li, Hu, Weston, and Tian}]{hao_training_2024}
Shibo Hao, Sainbayar Sukhbaatar, DiJia Su, Xian Li, Zhiting Hu, Jason Weston, and Yuandong Tian. 2024.
\newblock \href {https://doi.org/10.48550/arXiv.2412.06769} {Training {{Large Language Models}} to {{Reason}} in a {{Continuous Latent Space}}}.
\newblock \emph{arxiv:2412.06769[cs]}.

\bibitem[{Hay and Wolf(2023)}]{hay_dynamic_2023}
Tamir~David Hay and Lior Wolf. 2023.
\newblock \href {https://openreview.net/forum?id=d4uL2MSe0z} {Dynamic {{Layer Tying}} for {{Parameter-Efficient Transformers}}}.
\newblock In \emph{The {{Twelfth International Conference}} on {{Learning Representations}}}.

\bibitem[{Hoogeboom et~al.(2021)Hoogeboom, Nielsen, Jaini, Forr{\'e}, and Welling}]{hoogeboom_argmax_2021}
Emiel Hoogeboom, Didrik Nielsen, Priyank Jaini, Patrick Forr{\'e}, and Max Welling. 2021.
\newblock \href {https://proceedings.neurips.cc/paper/2021/hash/67d96d458abdef21792e6d8e590244e7-Abstract.html} {Argmax {{Flows}} and {{Multinomial Diffusion}}: {{Learning Categorical Distributions}}}.
\newblock In \emph{Advances in {{Neural Information Processing Systems}}}, volume~34, pages 12454--12465. Curran Associates, Inc.

\bibitem[{Hopfield(1982)}]{hopfield_neural_1982}
J~J Hopfield. 1982.
\newblock \href {https://www.ncbi.nlm.nih.gov/pmc/articles/PMC346238/} {Neural networks and physical systems with emergent collective computational abilities.}
\newblock \emph{Proceedings of the National Academy of Sciences of the United States of America}, 79(8):2554--2558.

\bibitem[{Jo and Hwang(2025)}]{jo_continuous_2025}
Jaehyeong Jo and Sung~Ju Hwang. 2025.
\newblock \href {https://doi.org/10.48550/arXiv.2502.11564} {Continuous {{Diffusion Model}} for {{Language Modeling}}}.
\newblock \emph{arxiv:2502.11564[cs]}.

\bibitem[{Kaplan et~al.(2024)Kaplan, Oren, Reif, and Schwartz}]{kaplan_tokens_2024}
Guy Kaplan, Matanel Oren, Yuval Reif, and Roy Schwartz. 2024.
\newblock \href {https://doi.org/10.48550/arXiv.2410.05864} {From {{Tokens}} to {{Words}}: {{On}} the {{Inner Lexicon}} of {{LLMs}}}.
\newblock \emph{arxiv:2410.05864[cs]}.

\bibitem[{Karimi~Mahabadi et~al.(2024)Karimi~Mahabadi, Ivison, Tae, Henderson, Beltagy, Peters, and Cohan}]{karimi_mahabadi_tess_2024}
Rabeeh Karimi~Mahabadi, Hamish Ivison, Jaesung Tae, James Henderson, Iz~Beltagy, Matthew Peters, and Arman Cohan. 2024.
\newblock \href {https://doi.org/10.18653/v1/2024.eacl-long.144} {{{TESS}}: {{Text-to-Text Self-Conditioned Simplex Diffusion}}}.
\newblock In \emph{Proceedings of the 18th {{Conference}} of the {{European Chapter}} of the {{Association}} for {{Computational Linguistics}} ({{Volume}} 1: {{Long Papers}})}, pages 2347--2361, St. Julian's, Malta. Association for Computational Linguistics.

\bibitem[{Lamme and Roelfsema(2000)}]{lamme_distinct_2000}
V.~A. Lamme and P.~R. Roelfsema. 2000.
\newblock \href {https://doi.org/10.1016/s0166-2236(00)01657-x} {The distinct modes of vision offered by feedforward and recurrent processing}.
\newblock \emph{Trends in Neurosciences}, 23(11):571--579.

\bibitem[{Leviathan et~al.(2023)Leviathan, Kalman, and Matias}]{leviathan_fast_2023}
Yaniv Leviathan, Matan Kalman, and Yossi Matias. 2023.
\newblock \href {https://proceedings.mlr.press/v202/leviathan23a.html} {Fast {{Inference}} from {{Transformers}} via {{Speculative Decoding}}}.
\newblock In \emph{Proceedings of the 40th {{International Conference}} on {{Machine Learning}}}, pages 19274--19286. PMLR.

\bibitem[{Li et~al.(2020)Li, Stickland, Tang, and Kong}]{li_deep_2020}
Xian Li, Asa~Cooper Stickland, Yuqing Tang, and Xiang Kong. 2020.
\newblock \href {https://doi.org/10.48550/arXiv.2009.13102} {Deep {{Transformers}} with {{Latent Depth}}}.
\newblock \emph{arxiv:2009.13102[cs]}.

\bibitem[{Liu et~al.(2024{\natexlab{a}})Liu, Pfeiffer, Wu, Xie, and Szlam}]{liu_deliberation_2024}
Luyang Liu, Jonas Pfeiffer, Jiaxing Wu, Jun Xie, and Arthur Szlam. 2024{\natexlab{a}}.
\newblock \href {https://doi.org/10.48550/arXiv.2412.17747} {Deliberation in {{Latent Space}} via {{Differentiable Cache Augmentation}}}.
\newblock \emph{arxiv:2412.17747[cs]}.

\bibitem[{Liu et~al.(2024{\natexlab{b}})Liu, Zhao, Iandola, Lai, Tian, Fedorov, Xiong, Chang, Shi, Krishnamoorthi, Lai, and Chandra}]{liu_mobilellm_2024}
Zechun Liu, Changsheng Zhao, Forrest Iandola, Chen Lai, Yuandong Tian, Igor Fedorov, Yunyang Xiong, Ernie Chang, Yangyang Shi, Raghuraman Krishnamoorthi, Liangzhen Lai, and Vikas Chandra. 2024{\natexlab{b}}.
\newblock \href {https://arxiv.org/abs/2402.14905} {{{MobileLLM}}: {{Optimizing Sub-billion Parameter Language Models}} for {{On-Device Use Cases}}}.
\newblock \emph{arxiv:2402.14905[cs]}.

\bibitem[{Lou et~al.(2024)Lou, Meng, and Ermon}]{lou_discrete_2024-1}
Aaron Lou, Chenlin Meng, and Stefano Ermon. 2024.
\newblock Discrete diffusion modeling by estimating the ratios of the data distribution.
\newblock In \emph{Proceedings of the 41st {{International Conference}} on {{Machine Learning}}}, volume 235 of \emph{{{ICML}}'24}, pages 32819--32848, Vienna, Austria. JMLR.org.

\bibitem[{Mathur et~al.(2024)Mathur, Pearlmutter, and Plis}]{mathur_mind_2024}
Mrinal Mathur, Barak~A. Pearlmutter, and Sergey~M. Plis. 2024.
\newblock \href {https://openreview.net/forum?id=EjJGND0m1x} {{{MIND}} over {{Body}}: {{Adaptive Thinking}} using {{Dynamic Computation}}}.
\newblock In \emph{The {{Thirteenth International Conference}} on {{Learning Representations}}}.

\bibitem[{McLeish et~al.(2024)McLeish, Bansal, Stein, Jain, Kirchenbauer, Bartoldson, Kailkhura, Bhatele, Geiping, Schwarzschild, and Goldstein}]{mcleish_transformers_2024}
Sean~Michael McLeish, Arpit Bansal, Alex Stein, Neel Jain, John Kirchenbauer, Brian~R. Bartoldson, Bhavya Kailkhura, Abhinav Bhatele, Jonas Geiping, Avi Schwarzschild, and Tom Goldstein. 2024.
\newblock \href {https://openreview.net/forum?id=aIyNLWXuDO} {Transformers {{Can Do Arithmetic}} with the {{Right Embeddings}}}.
\newblock In \emph{The {{Thirty-eighth Annual Conference}} on {{Neural Information Processing Systems}}}.

\bibitem[{Merrill and Sabharwal(2023)}]{merrill_parallelism_2023}
William Merrill and Ashish Sabharwal. 2023.
\newblock \href {https://doi.org/10.1162/tacl_a_00562} {The {{Parallelism Tradeoff}}: {{Limitations}} of {{Log-Precision Transformers}}}.
\newblock \emph{Transactions of the Association for Computational Linguistics}, 11:531--545.

\bibitem[{Merrill and Sabharwal(2025)}]{merrill_little_2025}
William Merrill and Ashish Sabharwal. 2025.
\newblock \href {https://doi.org/10.48550/arXiv.2503.03961} {A {{Little Depth Goes}} a {{Long Way}}: {{The Expressive Power}} of {{Log-Depth Transformers}}}.
\newblock \emph{arxiv:2503.03961[cs]}.

\bibitem[{Miao et~al.(2024)Miao, Oliaro, Zhang, Cheng, Wang, Zhang, Wong, Zhu, Yang, Shi et~al.}]{miao2024specinfer}
Xupeng Miao, Gabriele Oliaro, Zhihao Zhang, Xinhao Cheng, Zeyu Wang, Zhengxin Zhang, Rae Ying~Yee Wong, Alan Zhu, Lijie Yang, Xiaoxiang Shi, and 1 others. 2024.
\newblock Specinfer: Accelerating large language model serving with tree-based speculative inference and verification.
\newblock In \emph{Proceedings of the 29th ACM International Conference on Architectural Support for Programming Languages and Operating Systems, Volume 3}, pages 932--949.

\bibitem[{Mikolov et~al.(2010)Mikolov, Karafi{\'a}t, Burget, {\v C}ernock{\'y}, and Khudanpur}]{mikolov_recurrent_2010}
Tom{\'a}{\v s} Mikolov, Martin Karafi{\'a}t, Luk{\'a}{\v s} Burget, Jan {\v C}ernock{\'y}, and Sanjeev Khudanpur. 2010.
\newblock \href {https://doi.org/10.21437/Interspeech.2010-343} {Recurrent neural network based language model}.
\newblock In \emph{Proc. {{Interspeech}} 2010}, pages 1045--1048.

\bibitem[{Mohtashami et~al.(2024)Mohtashami, Pagliardini, and Jaggi}]{mohtashami_cotformer_2024}
Amirkeivan Mohtashami, Matteo Pagliardini, and Martin Jaggi. 2024.
\newblock \href {https://openreview.net/forum?id=7igPXQFupX} {{{CoTFormer}}: {{A Chain}} of {{Thought Driven Architecture}} with {{Budget-Adaptive Computation Cost}} at {{Inference}}}.
\newblock In \emph{The {{Thirteenth International Conference}} on {{Learning Representations}}}.

\bibitem[{Nie et~al.(2025)Nie, Zhu, You, Zhang, Ou, Hu, Zhou, Lin, Wen, and Li}]{nie_large_2025}
Shen Nie, Fengqi Zhu, Zebin You, Xiaolu Zhang, Jingyang Ou, Jun Hu, Jun Zhou, Yankai Lin, Ji-Rong Wen, and Chongxuan Li. 2025.
\newblock \href {https://doi.org/10.48550/arXiv.2502.09992} {Large {{Language Diffusion Models}}}.
\newblock \emph{arxiv:2502.09992[cs]}.

\bibitem[{Peebles and Xie(2023)}]{peebles_scalable_2023}
William Peebles and Saining Xie. 2023.
\newblock \href {https://openaccess.thecvf.com/content/ICCV2023/html/Peebles\_Scalable\_Diffusion\_Models\_with\_Transformers\_ICCV\_2023\_paper.html} {Scalable {{Diffusion Models}} with {{Transformers}}}.
\newblock In \emph{Proceedings of the {{IEEE}}/{{CVF International Conference}} on {{Computer Vision}}}, pages 4195--4205.

\bibitem[{Radford et~al.(2019)Radford, Wu, Child, Luan, Amodei, and Sutskever}]{radford_language_2019}
Alec Radford, Jeffrey Wu, Rewon Child, David Luan, Dario Amodei, and Ilya Sutskever. 2019.
\newblock Language {{Models}} are {{Unsupervised Multitask Learners}}.
\newblock \emph{OpenAI}, page~24.

\bibitem[{Richemond et~al.(2023)Richemond, Dieleman, and Doucet}]{richemond_categorical_2023}
Pierre~Harvey Richemond, Sander Dieleman, and Arnaud Doucet. 2023.
\newblock \href {https://openreview.net/forum?id=6rETbXxGX5\#all} {Categorical {{SDEs}} with {{Simplex Diffusion}}}.
\newblock In \emph{{{ICML}} 2023 {{Workshop}}: {{Sampling}} and {{Optimization}} in {{Discrete Space}}}.

\bibitem[{Rombach et~al.(2022)Rombach, Blattmann, Lorenz, Esser, and Ommer}]{rombach_high-resolution_2022}
Robin Rombach, Andreas Blattmann, Dominik Lorenz, Patrick Esser, and Bj{\"o}rn Ommer. 2022.
\newblock \href {https://openaccess.thecvf.com/content/CVPR2022/html/Rombach\_High-Resolution\_Image\_Synthesis\_With\_Latent\_Diffusion\_Models\_CVPR\_2022\_paper.html} {High-{{Resolution Image Synthesis With Latent Diffusion Models}}}.
\newblock In \emph{Proceedings of the {{IEEE}}/{{CVF Conference}} on {{Computer Vision}} and {{Pattern Recognition}}}, pages 10684--10695.

\bibitem[{Sch{\"o}ne et~al.(2025)Sch{\"o}ne, Rahmani, Kremer, Falck, Ballani, and Gladrow}]{schone_implicit_2025}
Mark Sch{\"o}ne, Babak Rahmani, Heiner Kremer, Fabian Falck, Hitesh Ballani, and Jannes Gladrow. 2025.
\newblock \href {https://doi.org/10.48550/arXiv.2502.07827} {Implicit {{Language Models}} are {{RNNs}}: {{Balancing Parallelization}} and {{Expressivity}}}.
\newblock \emph{arxiv:2502.07827[cs]}.

\bibitem[{Schwarzschild et~al.(2021)Schwarzschild, Borgnia, Gupta, Huang, Vishkin, Goldblum, and Goldstein}]{schwarzschild_can_2021}
Avi Schwarzschild, Eitan Borgnia, Arjun Gupta, Furong Huang, Uzi Vishkin, Micah Goldblum, and Tom Goldstein. 2021.
\newblock \href {https://proceedings.neurips.cc/paper\_files/paper/2021/hash/3501672ebc68a5524629080e3ef60aef-Abstract.html} {Can {{You Learn}} an {{Algorithm}}? {{Generalizing}} from {{Easy}} to {{Hard Problems}} with {{Recurrent Networks}}}.
\newblock In \emph{Advances in {{Neural Information Processing Systems}}}, volume~34, pages 6695--6706. Curran Associates, Inc.

\bibitem[{Skean et~al.(2024)Skean, Arefin, LeCun, and {Shwartz-Ziv}}]{skean_does_2024}
Oscar Skean, Md~Rifat Arefin, Yann LeCun, and Ravid {Shwartz-Ziv}. 2024.
\newblock \href {https://doi.org/10.48550/arXiv.2412.09563} {Does {{Representation Matter}}? {{Exploring Intermediate Layers}} in {{Large Language Models}}}.
\newblock \emph{arxiv:2412.09563[cs]}.

\bibitem[{Song and Ermon(2019)}]{song_generative_2019}
Yang Song and Stefano Ermon. 2019.
\newblock \href {https://arxiv.org/abs/1907.05600} {Generative {{Modeling}} by {{Estimating Gradients}} of the {{Data Distribution}}}.
\newblock \emph{arXiv:1907.05600 [cs, stat]}.

\bibitem[{Sun et~al.(2024)Sun, Pickett, Nain, and Jones}]{sun_transformer_2024}
Qi~Sun, Marc Pickett, Aakash~Kumar Nain, and Llion Jones. 2024.
\newblock \href {https://doi.org/10.48550/arXiv.2407.09298} {Transformer {{Layers}} as {{Painters}}}.
\newblock \emph{arxiv:2407.09298[cs]}.

\bibitem[{Sutskever et~al.(2008)Sutskever, Hinton, and Taylor}]{sutskever_recurrent_2008}
Ilya Sutskever, Geoffrey~E Hinton, and Graham~W Taylor. 2008.
\newblock \href {https://proceedings.neurips.cc/paper\_files/paper/2008/hash/9ad6aaed513b73148b7d49f70afcfb32-Abstract.html} {The {{Recurrent Temporal Restricted Boltzmann Machine}}}.
\newblock In \emph{Advances in {{Neural Information Processing Systems}}}, volume~21. Curran Associates, Inc.

\bibitem[{Sutskever et~al.(2011)Sutskever, Martens, and Hinton}]{sutskever_generating_2011}
Ilya Sutskever, James Martens, and Geoffrey Hinton. 2011.
\newblock Generating text with recurrent neural networks.
\newblock In \emph{Proceedings of the 28th {{International Conference}} on {{International Conference}} on {{Machine Learning}}}, {{ICML}}'11, pages 1017--1024, Madison, WI, USA. Omnipress.

\bibitem[{Tan et~al.(2023)Tan, Shen, Chen, Courville, and Gan}]{tan_sparse_2023}
Shawn Tan, Yikang Shen, Zhenfang Chen, Aaron Courville, and Chuang Gan. 2023.
\newblock \href {https://doi.org/10.48550/arXiv.2310.07096} {Sparse {{Universal Transformer}}}.
\newblock \emph{arxiv:2310.07096[cs]}.

\bibitem[{Touvron et~al.(2023)Touvron, Martin, Stone, Albert, Almahairi, Babaei, Bashlykov, Batra, Bhargava, Bhosale, Bikel, Blecher, Ferrer, Chen, Cucurull, Esiobu, Fernandes, Fu, Fu, Fuller, Gao, Goswami, Goyal, Hartshorn, Hosseini, Hou, Inan, Kardas, Kerkez, Khabsa, Kloumann, Korenev, Koura, Lachaux, Lavril, Lee, Liskovich, Lu, Mao, Martinet, Mihaylov, Mishra, Molybog, Nie, Poulton, Reizenstein, Rungta, Saladi, Schelten, Silva, Smith, Subramanian, Tan, Tang, Taylor, Williams, Kuan, Xu, Yan, Zarov, Zhang, Fan, Kambadur, Narang, Rodriguez, Stojnic, Edunov, and Scialom}]{touvron_llama_2023-1}
Hugo Touvron, Louis Martin, Kevin Stone, Peter Albert, Amjad Almahairi, Yasmine Babaei, Nikolay Bashlykov, Soumya Batra, Prajjwal Bhargava, Shruti Bhosale, Dan Bikel, Lukas Blecher, Cristian~Canton Ferrer, Moya Chen, Guillem Cucurull, David Esiobu, Jude Fernandes, Jeremy Fu, Wenyin Fu, and 49 others. 2023.
\newblock \href {https://doi.org/10.48550/arXiv.2307.09288} {Llama 2: {{Open Foundation}} and {{Fine-Tuned Chat Models}}}.
\newblock \emph{arxiv:2307.09288[cs]}.

\bibitem[{Wang et~al.(2025{\natexlab{a}})Wang, Li, Sun, Chen, Liu, Wu, Lu, Song, and Yadkori}]{wang_hierarchical_2025}
Guan Wang, Jin Li, Yuhao Sun, Xing Chen, Changling Liu, Yue Wu, Meng Lu, Sen Song, and Yasin~Abbasi Yadkori. 2025{\natexlab{a}}.
\newblock \href {https://doi.org/10.48550/arXiv.2506.21734} {Hierarchical {{Reasoning Model}}}.
\newblock \emph{arxiv:2506.21734[cs]}.

\bibitem[{Wang et~al.(2025{\natexlab{b}})Wang, Xu, Jin, Jin, Zhang, and Deng}]{wang_diffusion_2025}
Xu~Wang, Chenkai Xu, Yijie Jin, Jiachun Jin, Hao Zhang, and Zhijie Deng. 2025{\natexlab{b}}.
\newblock \href {https://doi.org/10.48550/arXiv.2508.09192} {Diffusion {{LLMs Can Do Faster-Than-AR Inference}} via {{Discrete Diffusion Forcing}}}.
\newblock \emph{arxiv:2508.09192[cs]}.

\bibitem[{Wu et~al.(2025)Wu, Yan, Zhang, Lu, Zeng, Wang, and Zhou}]{wu2025efficient}
Bohong Wu, Shen Yan, Sijun Zhang, Jianqiao Lu, Yutao Zeng, Ya~Wang, and Xun Zhou. 2025.
\newblock Efficient pretraining length scaling.
\newblock \emph{arXiv preprint arXiv:2504.14992}.

\bibitem[{Xie et~al.(2025)Xie, Ye, Zheng, Gao, Dong, Wu, Zhao, Gong, Jiang, Li, and Kong}]{xie_dream-coder_2025}
Zhihui Xie, Jiacheng Ye, Lin Zheng, Jiahui Gao, Jingwei Dong, Zirui Wu, Xueliang Zhao, Shansan Gong, Xin Jiang, Zhenguo Li, and Lingpeng Kong. 2025.
\newblock \href {https://doi.org/10.48550/arXiv.2509.01142} {Dream-{{Coder 7B}}: {{An Open Diffusion Language Model}} for {{Code}}}.
\newblock \emph{arxiv:2509.01142[cs]}.

\bibitem[{Yang et~al.(2024)Yang, Lee, Nowak, and Papailiopoulos}]{yang_looped_2024}
Liu Yang, Kangwook Lee, Robert~D. Nowak, and Dimitris Papailiopoulos. 2024.
\newblock \href {https://openreview.net/forum?id=HHbRxoDTxE} {Looped {{Transformers}} are {{Better}} at {{Learning Learning Algorithms}}}.
\newblock In \emph{The {{Twelfth International Conference}} on {{Learning Representations}}}.

\bibitem[{Ye et~al.(2025)Ye, Xie, Zheng, Gao, Wu, Jiang, Li, and Kong}]{ye_dream_2025}
Jiacheng Ye, Zhihui Xie, Lin Zheng, Jiahui Gao, Zirui Wu, Xin Jiang, Zhenguo Li, and Lingpeng Kong. 2025.
\newblock \href {https://doi.org/10.48550/arXiv.2508.15487} {Dream {{7B}}: {{Diffusion Large Language Models}}}.
\newblock \emph{arxiv:2508.15487[cs]}.

\bibitem[{Yu et~al.(2023)Yu, Jiang, Shi, Yu, Liu, Zhang, Kwok, Li, Weller, and Liu}]{yu_metamath_2023}
Longhui Yu, Weisen Jiang, Han Shi, Jincheng Yu, Zhengying Liu, Yu~Zhang, James Kwok, Zhenguo Li, Adrian Weller, and Weiyang Liu. 2023.
\newblock \href {https://openreview.net/forum?id=N8N0hgNDRt} {{{MetaMath}}: {{Bootstrap Your Own Mathematical Questions}} for {{Large Language Models}}}.
\newblock In \emph{The {{Twelfth International Conference}} on {{Learning Representations}}}.

\end{thebibliography}
\bibliographystyle{acl_natbib}
}
\appendix
\section{Appendix}

\subsection{AdditionaL Algorithm Details}
We provide the full algorithm, including adaptive exiting in \cref{alg:latent_diff_freeze}.
\begin{algorithm}[h]
\caption{Diffusion-style generation with latent-diference-based freezing}
\label{alg:latent_diff_freeze}
\begin{algorithmic}[1]
\Require prompt $\mathbf{x}$, max new tokens $N$, inner recurrence $r$, diffusion steps $T$, init scale $\alpha$, exit threshold $\varepsilon$
\State \(\mathbf{y}_{\mathrm{frozen}}\leftarrow \mathbf{x}\), \(\mathbf{y}_{\mathrm{current}}\leftarrow \mathbf{x}\)
\State \(\mathbf{z}\leftarrow \InitState(|\mathbf{x}|,\alpha)\)
\State \(\mathbf{z}_{\mathrm{prev}}\leftarrow \mathbf{z}\) 
\For{step $t=1,\dots,T$}
    \State \(\mathbf{e}\leftarrow \mathcal{P}(\mathbf{y}_{\mathrm{current}})\)
    \State \(\mathbf{z}_{\text{noise}}\sim\mathcal{N}(0,\sigma^2 I)\)
    \State \(\mathbf{z}\leftarrow (1-\beta_r)\mathbf{z}+\beta_r\mathbf{z}_{\text{noise}}\)
    \For{$j=1,\dots,r$}
        \State \(\mathbf{z}\leftarrow \mathcal{R}(\mathbf{z},\mathbf{e})\)
    \EndFor
    \State \(\mathbf{p}\leftarrow \mathcal{C}(\mathbf{z})\)
    \State \(\hat{\mathbf{y}}\leftarrow \Sample(\mathbf{p})\)
    \State $\mathbf{y}_{\mathrm{current}}\leftarrow [\mathbf{y}_{\mathrm{frozen}},\hat{\mathbf{y}}]$
    \State \(\delta_i \leftarrow ||\mathbf{z}_i-\mathbf{z}_{\mathrm{prev},i}||_2 / ||{\mathbf{z}_{i}||_2}\)
    \Comment{Compute relative changes in latents at each position.}
    \If{exists position $i$ with $\delta_i < \varepsilon$}
        \State let $k^* \leftarrow$ index of the last such freezable position where $\delta_i < \varepsilon$ \Comment{freeze up to $k^*$}
        \State \(\mathbf{y}_{\mathrm{frozen}}\leftarrow \mathbf{y}_{\mathrm{current}}[1{:}k^*]\)
        \State keep only unfrozen tail of latents: \(\mathbf{z}\leftarrow \mathbf{z}[k^* - \ell{:}]\)
    \Else
        \State no tokens frozen this step
    \EndIf
    \If{$|\mathbf{y}_{\mathrm{frozen}}|-|\mathbf{x}|\ge N$} \textbf{break} \EndIf
    \State \(\mathbf{z}\leftarrow [\mathbf{z},\,\InitState(1,\alpha)]\) \Comment{Append a new latent state for the next position}
    \State \(\mathbf{z}_{\mathrm{prev}}\leftarrow \mathbf{z}\)

\EndFor
\State \Return \(\mathbf{y}_{\mathrm{frozen}}\)
\end{algorithmic}
\end{algorithm}

\subsection{Additional Variants}\label{app:batch_engine}

\paragraph{Larger Batch Sizes.}
The sampler discussed in this work could, in principle, also be deployed in batched or continuously-batched inference settings. In that scenario, similar to a paged KV cache, the sampler would reserve a number of slots for hidden states up to an occupancy multiplier of the maximum wavefront size, and would be capable of scheduling recurrent updates in tandem with sequence updates. For larger models, this would, if implemented efficiently, actually simplify deployment, as recurrent states are fungible, and e.g. states could be evicted from one device, and then bundled into the next forward call of the model on a different device, as the slots of the model's hidden states do not have to correspond to contiguous sequences in either the sequence or the recurrence dimension. However, due to to the imminent complexity of such an inference engine, we refrained from engaging with this direction in this work, and focus only on properly bringing the general idea of diffusion sampling to recurrent-depth models, and leave a batched inference engine as a limitation, potentially motivating future work.

\subsection{Additional Information.}

\paragraph{Finetuned Math Model:}
To verify that our findings are not limited to the particular model checkpoint we evaluate, and its capabilities, we finetune the original checkpoint for one epoch with a trapezoidal learning rate schedule with a peak learning rate of $5 \times 10^{-7}$ using the MetaMath dataset \citep{yu_metamath_2023}. As suggested in the original work, we train the model with randomized unrolling, we set a mean of $r=32$ and sample $r$ from an Exponential distribution. As a sidenote, we remark that while we do train the full model, most of the gains can also be achieved by just finetuning the adapter component of the model that maps inputs and states into the recurrent block. 

\paragraph{Dataset Details.}
When evaluating GSM8k, we always refer to the CoT version of the dataset, which we provide to the model with the 8 few-shot examples associated with this variant as in \citet{touvron_llama_2023-1}. We always score GSM8k using the flexible-extract metric, i.e. by matching the last number in the model response against the reference answer. For MATH500, we follow the format of \citet{deepseek-ai_deepseek-r1_2025}, while for Minerva Math, we follow the updated format established in the lm-eval harness. For both, we grade answers using \textit{math-verify}. For MBPP and HumanEval, we grade these benchmarks as normal. During inference we sample with a temperature of 0.2 and top-p factor of $0.95$ as in \citet{geiping_scaling_2025}.

\begin{figure}
    \centering
    \includegraphics[width=0.49\linewidth]{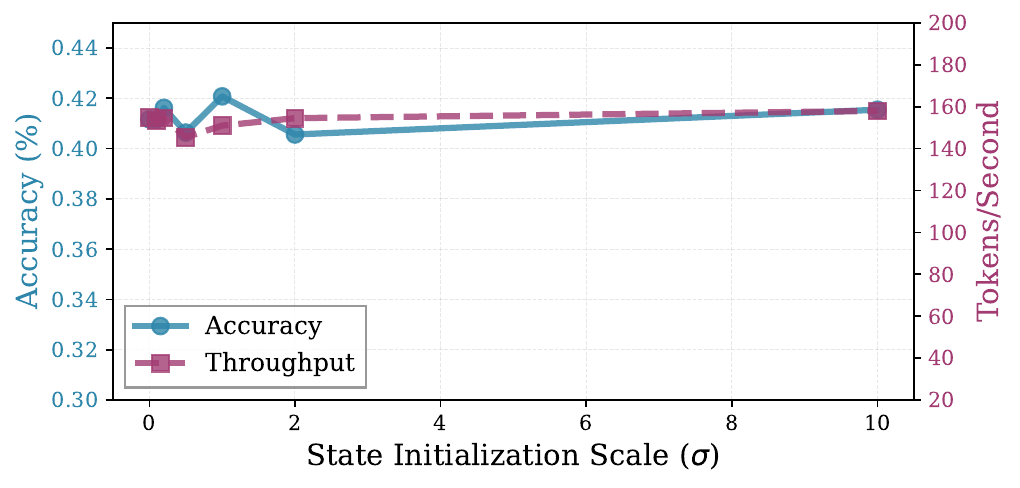}
    \includegraphics[width=0.49\linewidth]{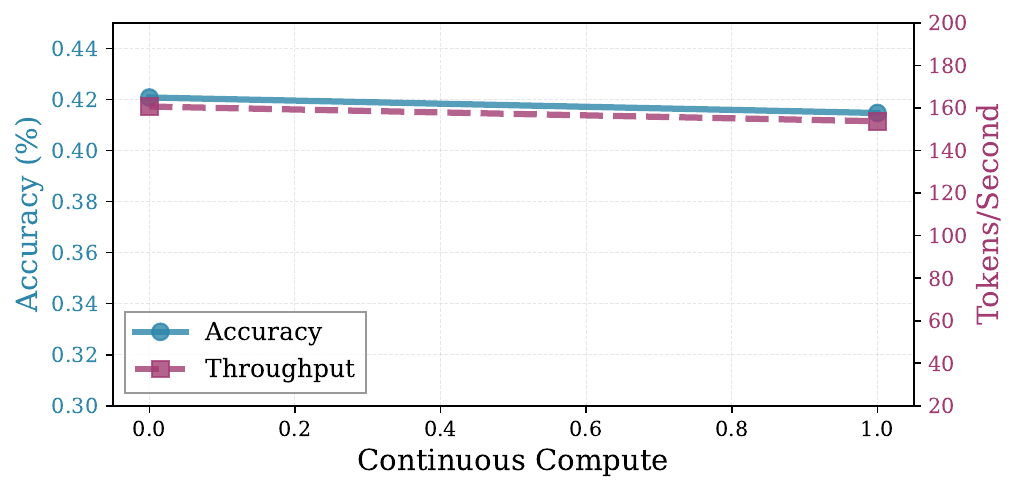}
    \caption{Impact of Additional Hyperparameter Choices, also on GSM8k. \textbf{Left} Initialization Scale of new states, which has only a minor effect of the result. \textbf{Right:} Continuous Compute, i.e. choosing to initialize new states with previously computed states (We initialize new states with the latest state from the position one step to the left). This is less effective for our sampler, given that the position one step to the left is only the result of $r'$ recurrences.}
    \label{fig:hyperparam_sweeps2}
\end{figure}

\begin{figure}
    \centering
    \includegraphics[width=\linewidth]{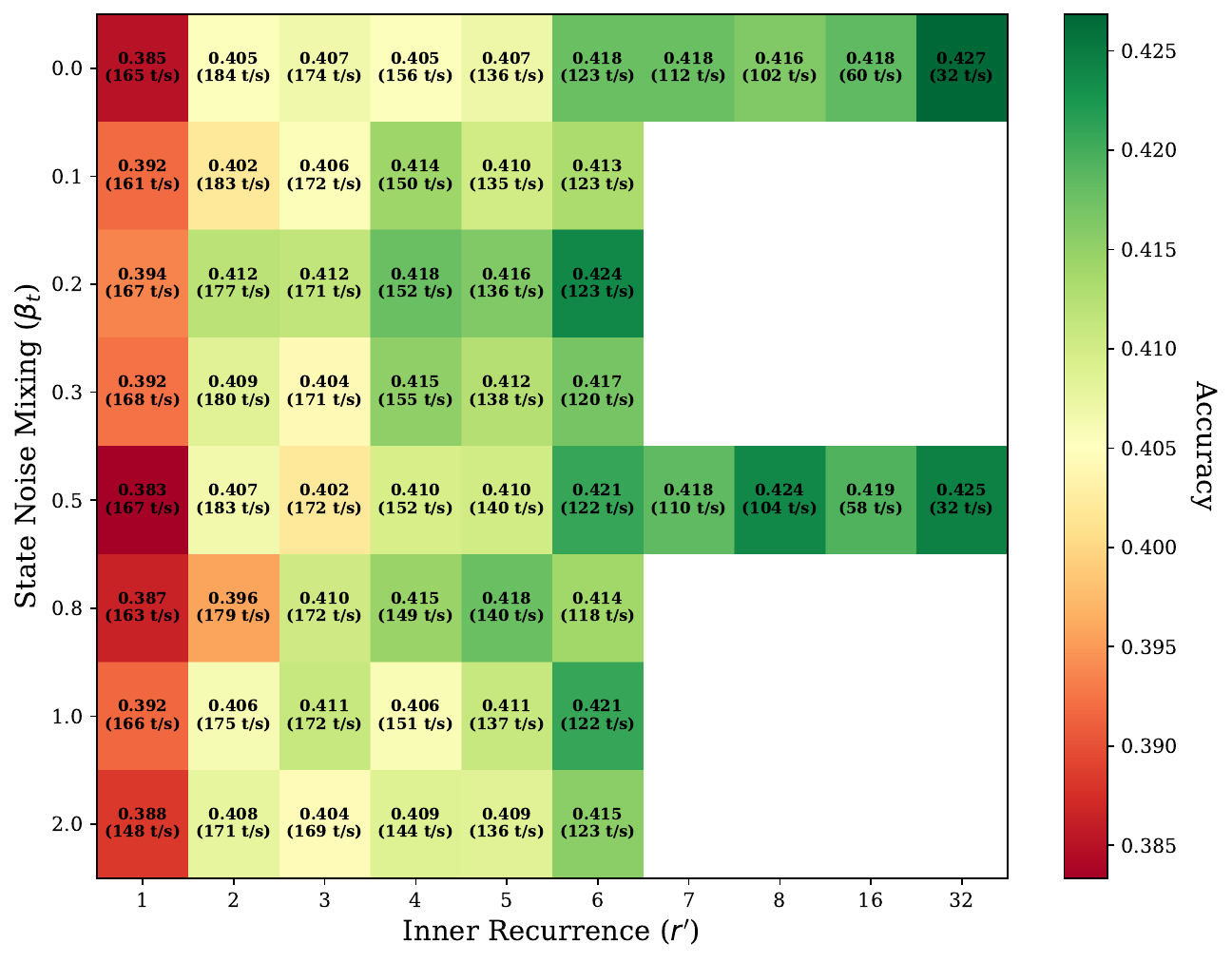}
    \caption{A heatmap of accuracy and throughput measurements spanned by varying noise and inner recurrence.}
    \label{fig:heatmap}
\end{figure}

\begin{figure}
    \centering
    \includegraphics[width=\linewidth]{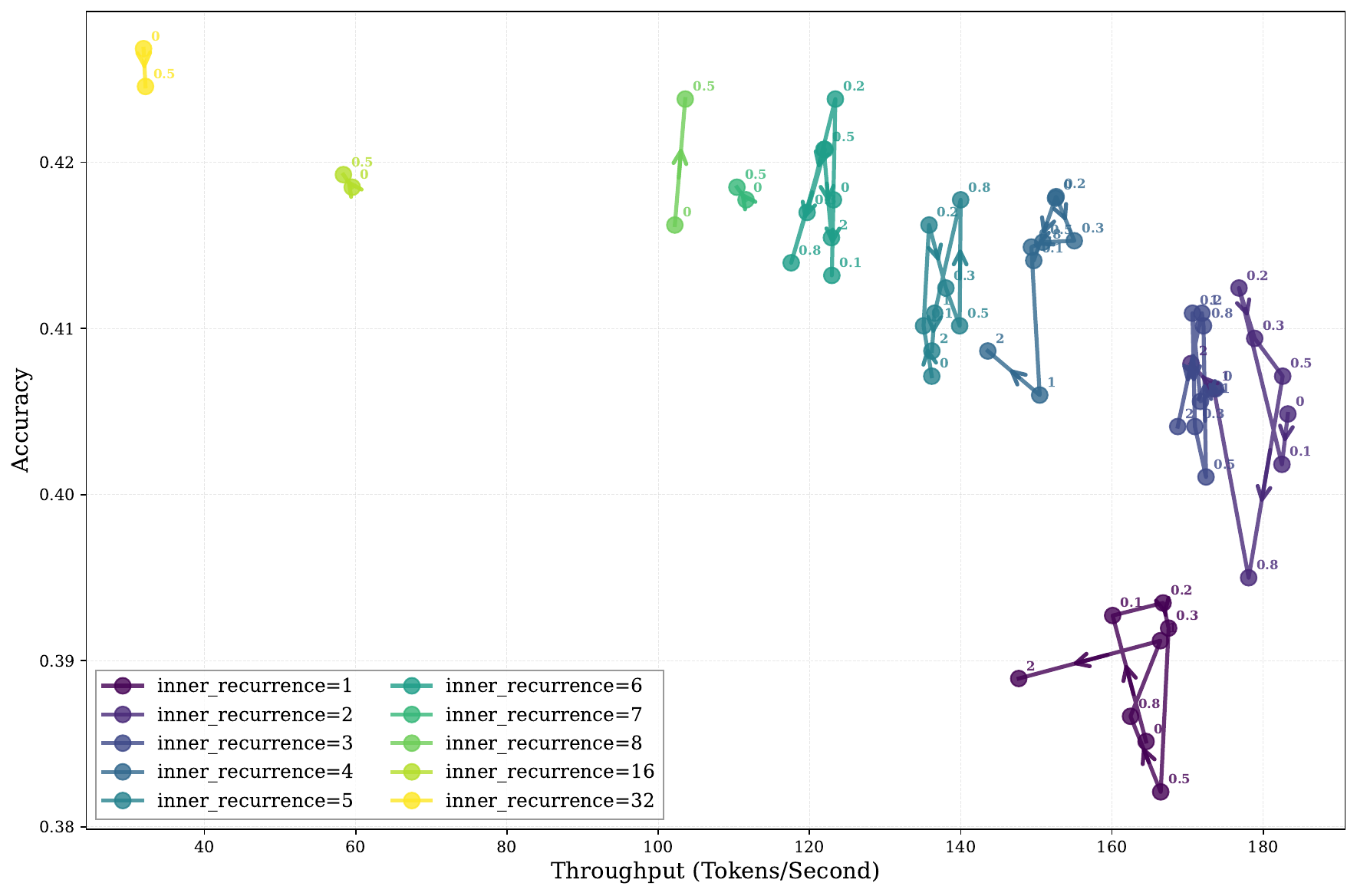}
    \caption{Additional visualizations of the trade-off of noise and inner recurrence in \cref{fig:pareto}.}
    \label{fig:tradeoff}
\end{figure}

\subsection{Qualitative Evaluation}
To visualize the progress (or temporary lack thereof) of the sampler on a challenging sequence from the GSM8k validation set, we provide a few additional visualizations in \cref{fig:token_stability}.

\begin{figure}[h]
    \centering
    \includegraphics[width=0.9\textwidth]{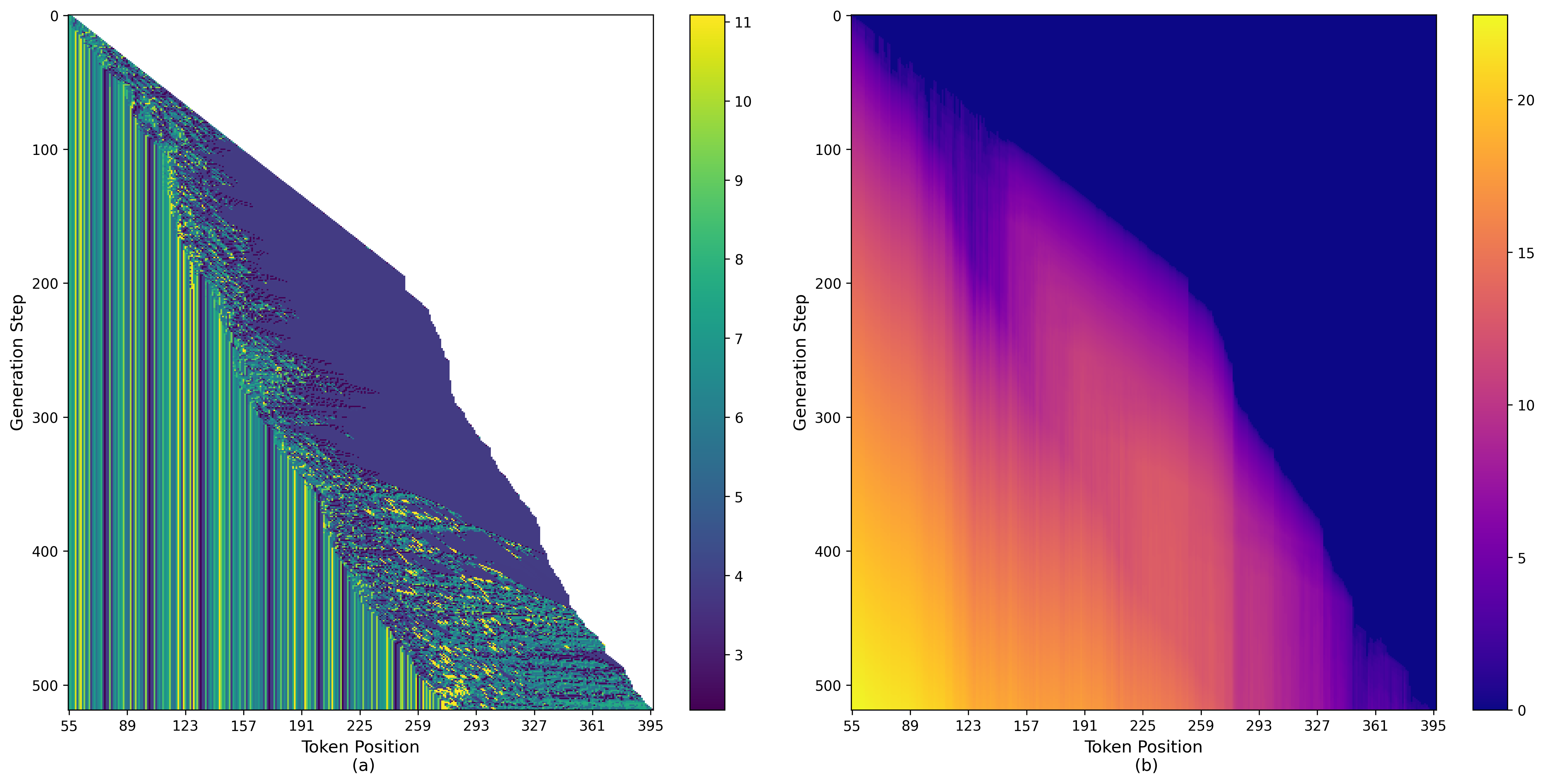}
    \caption{A full example of a sampler hyperparameter failure. As in \cref{fig:qualitative_examples}, this figure shows the token ids on the left, as they change during successive steps of the sampler (running from top to bottom) over the sequence dimension (running left to right). We see that the model tries various configurations for the current tokens, before they are gradually frozen as their latent states converge. Due to a few hard decisions (from the perspective of the model), as seen on the stability charts on the right, early in the sequence, progress stalls until these tokens are decided, but then picks up speed again. However, large points of the wavefront all decode into the whitespace token (dark blue color), so that no useful states information is computed until the earlier tokens are resolved.}
    \label{fig:token_stability}
\end{figure}






\begin{figure}[h]
    \centering
    \includegraphics[width=0.49\linewidth]{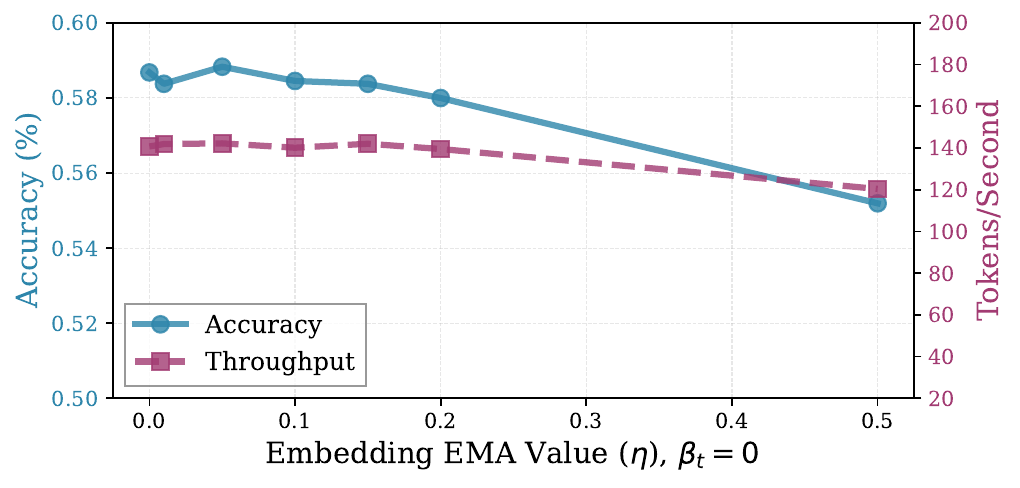}
    \includegraphics[width=0.49\linewidth]{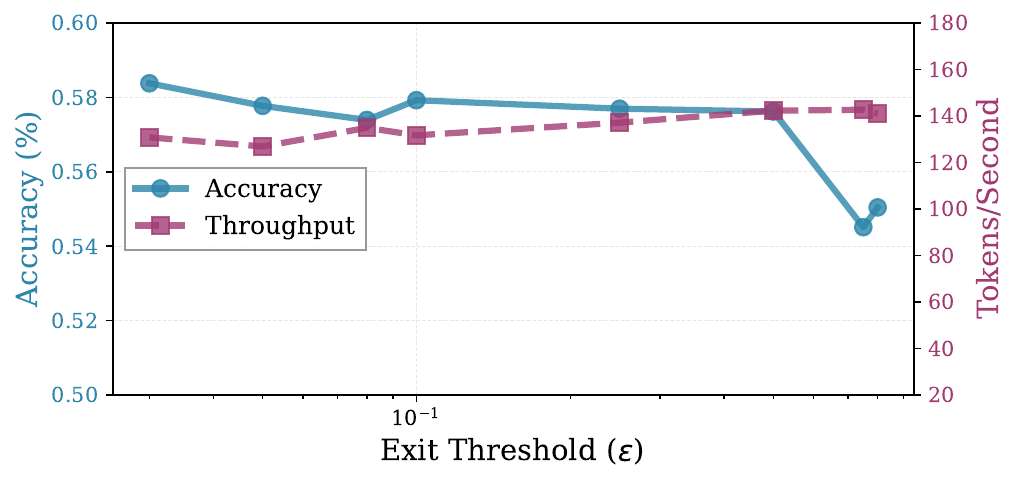}
    \includegraphics[width=0.49\linewidth]{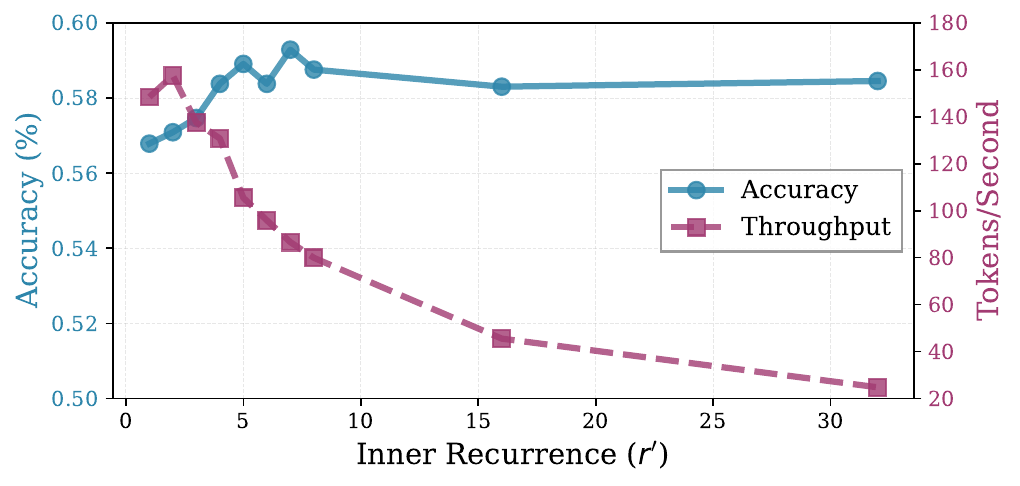}
    \includegraphics[width=0.49\linewidth]{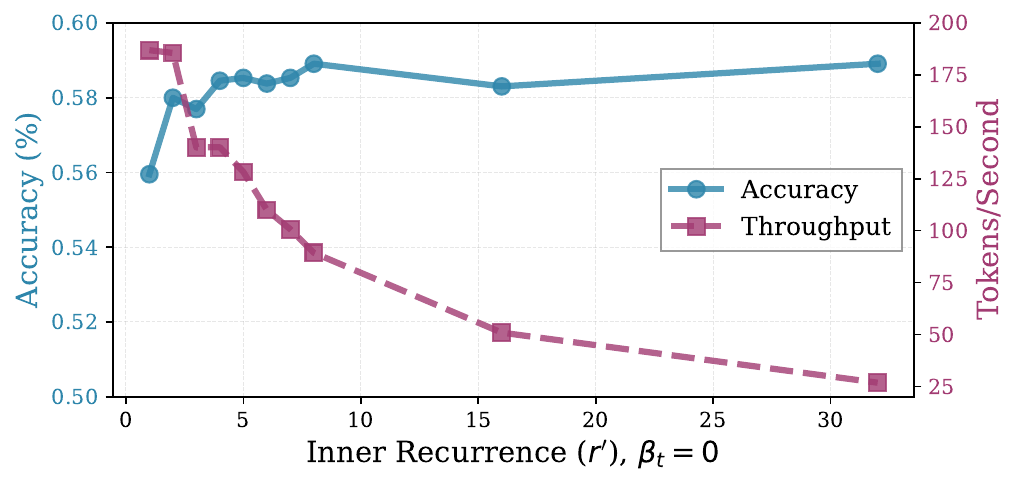}
    \includegraphics[width=0.49\linewidth]{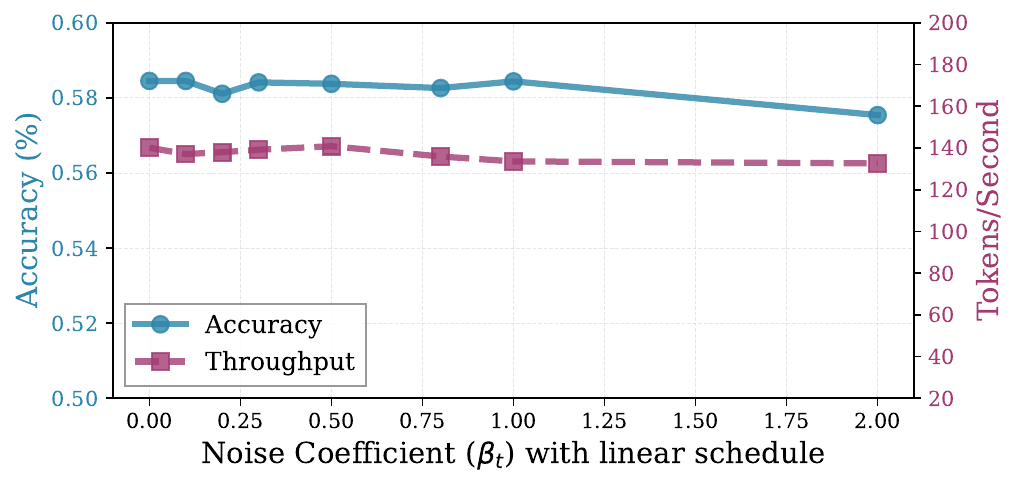}
    \caption{Hyperparameter Robustness for the finetuned math model on GSM8k. These figure repeat the ablation study from the main body concerning hyperparameter robustness also for the finetuned math model, showing that behaviors are largely similar, even though the model's capability has noticeably changed.}
    \label{fig:metamath_ablations}
\end{figure}

    
    

                
                
    

\section{Theoretical Analysis}

\subsection{Problem Formulations}

\begin{definition}[Depth and Width in Recurrent-Depth Models]
Consider a recurrent-depth model $\mathcal{M}_{d}$ that processes an input sequence $\mathbf{x} \in \mathbb{R}^{L_0 \times h}$, where $L_0 \in \mathbb{N}$ is the sequence length and $h \in \mathbb{N}$ is the hidden dimension. At each generation step $t \in \mathbb{N}$, we define a \emph{hidden state} as the $h$-dimensional output vector produced by a Transformer block for an input token. Let $\mathbf{H}_t \in \mathbb{R}^{w_t \times h}$ denote the 2D-matrix containing all hidden states at step $t$. We define the following two associated quantities:

\begin{itemize}[itemsep=1pt,topsep=0pt,leftmargin=*]
\item the \emph{depth} $d_t \in \mathbb{N}$, defined as the number of \emph{serial} 
Transformer block forward passes used to obtain $\mathbf{H}_t$ from the initial $L_0$ input tokens 
(i.e., the generation step), while ignoring any discretization;
\item the \emph{width} $w_t \in \mathbb{N}$, defined as the cardinality of the active hidden-state set 
$\mathbf{H}_t$ ( i.e., the number of $h$-dimensional hidden states that are processed in \emph{parallel}
at generation step $t$).
\end{itemize}
These quantities evolve according to the following rules:
\begin{enumerate}[itemsep=1pt,topsep=0pt,leftmargin=*]
\item \textbf{Initialization.} At time $t=0$, we set
\[
d_0 = 0, \qquad w_0 = L_0.
\]
\item \textbf{Depth update.} At each step $t \ge 0$, one additional Transformer block is applied, hence
\[
d_{t+1} = d_t + 1,
\]
so that $d_t = t$ for all $t \in \mathbb{N}$.
\item \textbf{Width update.} 
At each step $t \ge 0$, the width changes only due to two types of events:
\begin{itemize}[itemsep=0pt,topsep=0pt,leftmargin=*]
  \item \emph{Token entry:} let $e^{(t)} \in \mathbb{N}_0$ denote the number of new tokens encoded 
  into the model at step $t$, each contributing a new hidden state;
  \item \emph{Hidden-state exit:} let $x^{(t)} \in \mathbb{N}_0$ denote the number of hidden states 
  removed from the model at step $t$ due to decoding.
\end{itemize}
Then the width evolves as
\[
w_{t+1} = w_t + e^{(t)} - x^{(t)}.
\]
Equivalently, the net change can be written as $\delta^{(t)} = e^{(t)} - x^{(t)}$, 
so that $\delta^{(t)} > 0$ corresponds to entries (more tokens encoded), 
and $\delta^{(t)} < 0$ corresponds to exits (more hidden states decoded).
\end{enumerate}
\end{definition}

\begin{remark}
At any generation step $t$, all hidden states in $H_t$ share the same depth $d_t$, 
since each step corresponds to one additional serial forward pass through the Transformer block.
\end{remark}

\subsection{LLMs should prioritize depth scaling during prefilling.}

\begin{definition}[Width Scaling Variants]
Fix a width scaling factor $s \in \mathbb{N}$. 
Given an input sequence of length $L$, for each token $i \in \{1,\dots,L\}$ 
we create $s$ copies indexed by $j \in \{1,\dots,s\}$. 
The replicated sequence therefore has length $L\cdot s$, with elements denoted by $(i,j)$, 
the $j$-th copy of token $i$. 
The width-scaling model is obtained by applying a Transformer block 
(with parameters unchanged) to this replicated sequence under a customized attention mask, 
followed by a reduction step that maps the $L\cdot s$ outputs back to length $L$ 
(e.g., by selecting the last copy or averaging over copies). 

We define two variants according to how each copy may attend:
\begin{itemize}[itemsep=0pt,topsep=0pt,leftmargin=*]
\item \textbf{Width-NoShare.} The $j$-th copy of token $i$ may attend to all copies of tokens $0,\dots,i-1$, 
as well as the first $j-1$ copies of token $i$.  

\item \textbf{Width-KVShare.} The $j$-th copy of token $i$ may attend only to the last copy of tokens $0,\dots,i-1$, 
together with the first $j-1$ copies of token $i$.
\end{itemize}
\end{definition}

\begin{proposition}
During prefilling, both \textsf{Width-NoShare} and \textsf{Width-KVShare} are valid 
width-scaling architectures with factor $s$.
\end{proposition}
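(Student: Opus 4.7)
The plan is to verify each clause of the width scaling definition separately for the two variants, treating the proposition as a structural check rather than a deep claim. First I would confirm that both variants take an input of length $L$, replicate each token $s$ times, and thereby produce a sequence of length $L\cdot s$. This gives $w_t = s \cdot L = s \cdot w_0$ throughout prefilling, matching the definition's width-scaling factor. I would also note that the Transformer block parameters are untouched in both variants: the only modifications are the construction of copies, the choice of attention mask, and the final reduction back to length $L$.

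Next I would check the two attention masks define valid (causal, well-defined) dependencies so that prefilling makes sense. For \textsf{Width-NoShare}, I would verify that the relation ``$(i,j)$ may attend to $(i',j')$ iff $i'<i$, or ($i'=i$ and $j'<j$)'' is a strict total order on replicated positions, so the mask is lower-triangular under the lexicographic ordering and thus causal. For \textsf{Width-KVShare}, I would argue analogously that restricting attention from $(i,j)$ to $\{(i',s): i'<i\}\cup\{(i,j'): j'<j\}$ defines a valid DAG of dependencies: all referenced positions precede $(i,j)$ in the lexicographic order, and the dependence on only the last copy of earlier tokens is consistent because that copy is fully computed before any copy of token $i$ is processed in the prefill schedule.

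Third, I would verify the reduction step. Since the reduction is defined as either selecting the last copy or averaging copies, it is a deterministic linear (or coordinate-selection) map $\mathbb{R}^{Ls\times h}\to\mathbb{R}^{L\times h}$, and in particular is well-defined during prefilling where all copies are computed in parallel. Combined with the previous two points, both variants meet every bullet of the \textsf{Width Scaling Variants} definition with scaling factor $s$.

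The main obstacle, and really the only substantive thing to pin down, is the causality/consistency argument for \textsf{Width-KVShare}: one has to make explicit that ``attend to the last copy of earlier tokens'' is compatible with a prefilling schedule in which all copies are processed together. I would address this by exhibiting the lexicographic processing order $(i,j)\prec(i',j')\iff i<i' \lor (i=i' \land j<j')$ and observing that every allowed attention edge points strictly backward in $\prec$, so that masking enforces this order even when the $Ls$ positions are evaluated in a single parallel pass. Once that is articulated, the remaining verifications are immediate from the construction.
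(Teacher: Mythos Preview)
Your proposal is not wrong, but it is aimed at a different target than the paper's proof and omits one of the two checks the paper actually carries out.

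The paper interprets ``valid width-scaling architecture with factor $s$'' purely through the Depth/Width definition: it verifies (i) that each variant still performs exactly one serial Transformer-block forward pass per step, so $\tilde d_t = d_t$, and (ii) that the number of hidden states processed in parallel becomes $Ls$, so $\tilde w_t = s\cdot w_t$. It explicitly notes that the attention masks only affect information flow, not the hidden-state count, and that the reduction occurs after the parallel computation and hence does not alter the measured width. That is the entire argument.

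Your plan recovers the width count $w_t = sL$, but you never state the depth half explicitly; it is at best implicit in ``parameters are untouched''. More importantly, most of your effort goes into establishing that the two attention masks are causal under a lexicographic order on replicated positions. That analysis is correct, but it is not what the proposition is asking you to verify: the Width Scaling Variants definition already \emph{stipulates} the masks, so checking they form a DAG is a well-definedness remark rather than a proof that width is scaled by $s$. Likewise, verifying that the variants ``meet every bullet'' of the Width Scaling Variants definition is essentially circular, since that definition is what introduces them.

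In short: keep your one-line width computation, add the one-line depth preservation argument (one serial block pass before and after replication), and drop or compress the causality discussion to at most a parenthetical remark. What remains is the paper's proof.
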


\begin{proof}
\textbf{Depth.}  
At any generation step, each variant performs exactly one Transformer block forward pass 
on the replicated sequence. 
Therefore the number of serial block forward passes needed to produce the hidden states is unchanged, 
so the depth satisfies $\tilde d_t = d_t$.

\textbf{Width.}  
By definition, the width $w_t$ is the number of hidden states produced in parallel at step $t$.  
In the original model, prefilling a sequence of length $L$ produces $L$ hidden states per step.  
In both variants, we replicate each token $s$ times, so the block computes hidden states for 
all pairs $(i,j)$ with $i \in \{1,\dots,L\}$ and $j \in \{1,\dots,s\}$.  
Hence the total number of hidden states produced in that step is
\[
\tilde w_t = Ls = s \cdot w_t.
\]
The difference between \textsf{NoShare} and \textsf{KVShare} lies only in the 
attention pattern (which copies each query may attend to). 
This affects information flow but not the number of hidden states computed.  
The optional reduction back to length $L$ occurs \emph{after} the parallel computation 
and thus does not change the measured width.

\textbf{Conclusion.}  
Both variants keep serial depth fixed and enlarge width by a factor of $s$, 
which is precisely our notion of width scaling.
\end{proof}

\end{document}